\definecolor{mydarkblue}{rgb}{0,0.08,0.45}
\theoremstyle{plain}
\newtheorem{theorem}{Theorem}[section]
\theoremstyle{definition}
\theoremstyle{remark}
\newtheorem{remark}[theorem]{Remark}
\DeclareMathOperator*{\argmax}{argmax}
\newcommand{\Uni}{\mathrm{Uni}}
\newcommand{\Gauss}{\mathcal{N}}
\newcommand{\E}{\mathbb{E}}
\newcommand{\Var}{\mathrm{Var}}
\newcommand{\Prob}{\mathbb{P}}
\newcommand{\R}{\mathbb{R}}
\newcommand{\N}{\mathbb{N}}
\newcommand{\supp}{\mathrm{supp}}
\newcommand{\Ind}{\mathbbm{1}}
\newcommand{\ECE}{\mathrm{ECE}}
\newcommand{\SECE}{\mathrm{LS}\text{-}\mathrm{ECE}}
\newcommand{\TV}{\mathrm{TV}}
\newcommand{\tvto}{\stackrel{\TV}{\to}}
\newcommand{\BIN}{\mathrm{BIN}}
\newcommand{\smECE}{\textsf{smECE}}
\newcommand{\smECEp}{\widetilde{\smECE}}
\newcommand{\de}[0]{\delta}
\newcommand{\De}[0]{\Delta}
\newcommand{\ep}[0]{\varepsilon}
\newcommand{\rh}[0]{\rho}
\newcommand{\si}[0]{\sigma}
\newcommand{\iy}[0]{\infty}
\newcommand{\ab}[1]{\left|#1\right|}
\newcommand{\pa}[1]{\left(#1\right)}
\newcommand{\ba}[1]{\left[#1\right]}
\newcommand{\dr}[2]{\frac{d#1}{d#2}}
\newcommand{\pf}[2]{\left(\frac{#1}{#2}\right)}
\newcommand{\fc}[2]{\frac{#1}{#2}}
\newcommand{\rc}[1]{\frac{1}{#1}}
\newcommand\numberthis{\addtocounter{equation}{1}\tag{\theequation}}
\numberwithin{equation}{section}
\title{How Flawed Is ECE? \\ An Analysis via Logit Smoothing}
\author{
\begin{tabular}{c c}
  \begin{tabular}{c}
    Muthu Chidambaram$^*$ \\
    Duke University \\
    \texttt{muthu@cs.duke.edu}
  \end{tabular} & \qquad
  \begin{tabular}{c}
    Holden Lee$^*$ \\
    Johns Hopkins University \\
    \texttt{hlee283@jhu.edu}
  \end{tabular} \\ \\
  \begin{tabular}{c}
    Colin McSwiggen$^*$ \\
    New York University \\
    \texttt{csm482@nyu.edu}
  \end{tabular} & \qquad
  \begin{tabular}{c}
    Semon Rezchikov\\
    Princeton University \\
    \texttt{semonr@princeton.edu}
  \end{tabular}
\end{tabular}
}
\date{\today}
\begin{document}

\maketitle
\def\thefootnote{*}\footnotetext{Equal contribution. A version of this paper appears in the \textit{Proceedings of the
$\mathit{41}^{st}$ International Conference on Machine Learning},
Vienna, Austria. PMLR 235, 2024.}\def\thefootnote{\arabic{footnote}}

\begin{abstract}
Informally, a model is calibrated if its predictions are correct with a probability that matches the confidence of the prediction. By far the most common method in the literature for measuring calibration is the expected calibration error (ECE). Recent work, however, has pointed out drawbacks of ECE, such as the fact that it is discontinuous in the space of predictors. In this work, we ask: how fundamental are these issues, and what are their impacts on existing results? Towards this end, we completely characterize the discontinuities of ECE with respect to general probability measures on Polish spaces. We then use the nature of these discontinuities to motivate a novel \textit{continuous, easily estimated} miscalibration metric, which we term \textit{Logit-Smoothed ECE (LS-ECE)}. By comparing the ECE and LS-ECE of pre-trained image classification models, we show in initial experiments that binned ECE closely tracks LS-ECE, indicating that the theoretical pathologies of ECE may be avoidable in practice. 
\end{abstract}

\section{Introduction}
The prevalence of machine learning across domains has increased drastically over the past few years, spurred by significant breakthroughs in deep learning for computer vision \citep{dalle2} and language modeling \citep{GPT3, openai2023gpt4, touvron2023llama}. Consequently, the underlying deep learning models are increasingly being evaluated for critical use cases such as predicting medical diagnoses \citep{elmarakeby2021biologically, nogales2021} and self-driving \citep{hu2023planningoriented}. In these latter cases, due to the risk associated with incorrect decision-making, it is crucial not only that the models be accurate, but also that they have proper predictive uncertainty.

This desideratum is formalized via the notion of \textit{calibration} \citep{dawid1982well, degroot1983comparison}, which codifies how well the model-predicted probabilities for events reflect their true frequencies conditional on the predictions. For example, in a medical context, a model that yields the correct diagnosis for a patient 95\% of the time when it predicts a probability of $\approx 0.95$ for that diagnosis can be considered to be calibrated.  

The analysis of whether modern deep learning models are calibrated can be traced back to the influential work of \citet{guo2017calibration}, which showed that recent models exhibit calibration issues not present in earlier models; in particular, they are overconfident when they are incorrect. These findings have been corroborated by a large body of subsequent work in which several training and post-training modifications have been proposed in order to improve calibration \citep{lakshminarayanan2017simple, kumar2018trainable, thulasidasan2019mixup, muller2020does, focalloss2021, wang2023metacalibration}.

However, the validity of these results depends on having an appropriate measure of calibration. The canonical measure of calibration in the machine learning literature has been the \textit{Expected Calibration Error (ECE)} and its binned variants \citep{naeini2014binary, nixon2019measuring}, and indeed all of the aforementioned works report ECE in some capacity.

Unfortunately, several works have pointed out (seemingly) significant drawbacks of ECE. First, it is discontinuous as a function of the model being considered. In other words, small changes to model predictions can cause large jumps in the ECE \citep{kakadeF08, FosterH18, blasiok2023unifying, blasiok2023smooth}. Second, it is not possible to efficiently estimate from samples \citep{arrieta2022metrics, lee2022t}, and binned variants can be sensitive to the choice of bin width \citep{nixon2019measuring, kumar2019verified, minderer2021revisiting}.

As a result of these drawbacks, a number of authors have recently proposed alternatives to ECE that enjoy better theoretical properties \citep{arrieta2022metrics, lee2022t, blasiok2023smooth, blasiok2023unifying}. Despite these proposals, as noted in \citet{blasiok2023smooth}, ECE continues to be the main metric reported in very recent studies. \citet{blasiok2023smooth} hypothesize that the reason for this fact is that ECE can be easily visualized and interpreted via reliability diagrams.

In this work, we propose an alternative explanation that may serve to justify the continued predominance of ECE: besides the fact that ECE is historically established and well supported by standard codebases, the pathologies of ECE are not encountered in practice due to noise inherent to the data and model training process. This paper formalizes one simple variant of such a noise model. In this model, we show that the addition of noise makes ECE continuous and leads to an effective estimation scheme, which moreover does not appreciably differ from direct estimates of ECE performed via binning.  

Informed by this perspective, we aim to answer the following questions in this work:

\begin{itemize}
\item Can we characterize the points of discontinuity of ECE?
\item Can these discontinuities be eliminated by a simple modification of the miscalibration metric? 
\item Does the discontinuous behavior of ECE actually pose a problem for estimating the calibration of real-world deep learning models?
\end{itemize}

\subsection{Summary of Main Contributions}
Our main contributions towards answering the above questions are as follows.
\begin{enumerate}
    \item In Section \ref{sec:cont}, we completely characterize the discontinuities of ECE in a very general setting.  We illustrate in detail how these considerations apply in the case of discrete data distributions with finite support, and we show that in this case the discontinuities are a measure zero set. The case of continuous distributions is more subtle, and intuitions from the discrete setting do not always carry over; however, we nevertheless provide a necessary and sufficient condition for discontinuity in the case of arbitrary distributions of data taking values in a Polish space, and we show that in this setting the ECE is always a lower semicontinuous functional.
    \item Building on the ideas of Section \ref{sec:cont}, we derive a modified ECE measure in Section \ref{sec:lsece} which we term \textit{Logit-Smoothed ECE (LS-ECE)}. We show that the LS-ECE is continuous in the space of predictors, for \textit{any} data distribution. Our results rely on establishing strong connections between convergence of the underlying joint probability measures in total variation on one hand, and continuity of the ECE functional on the other, which may be of independent interest.
    \item We further propose a consistent estimator of the LS-ECE in Section \ref{sec:est}, and show that our estimator can both be efficiently estimated and implemented. As an additional consequence of our estimation result, we show that LS-ECE can be used to produce a consistent estimator of the true ECE when the predictive distribution satisfies mild regularity conditions, which to the best of our knowledge is a stronger result than any pre-existing consistency results for ECE.
    \item Lastly, in Section \ref{sec:experiments}, we verify empirically that LS-ECE is continuous even when ECE is not, and also show that for the standard image classification benchmarks of CIFAR-10, CIFAR-100, and ImageNet, both ECE and LS-ECE produce near identical results across various models --- indicating that the theoretical pathologies of ECE may not pose an issue in practice.
\end{enumerate}
We note that, in the process of analyzing ECE, we have proposed yet another competing notion in the form of LS-ECE. We wish to stress that we are not trying to claim that LS-ECE is a ``better'' measure of calibration than recently proposed alternatives, and in fact it shares much in common with the SmoothECE proposal of \citet{blasiok2023smooth}. Rather, we view LS-ECE as a useful theoretical and empirical tool for sanity-checking ECE in a given setting --- our experiments in Section \ref{sec:imageclass} use it to suggest that reported ECE results may not be particularly brittle. Furthermore, we hope that the theoretical framework under which we formulate LS-ECE will prove useful in future analyses of calibration.

\subsection{Related Work}
\textbf{Estimation of ECE.} Perhaps the most common way to estimate ECE in the literature is by binning predictions into uniformly sized bins \citep{naeini2014binary}. A similarly popular approach is to bin predictions using equal mass bins, which leads to Adaptive Calibration Error \citep{nixon2019measuring}. These binning approaches are, however, known not to be consistent \citep{vaicenavicius2019evaluating}, and follow-up works have modified them further via debiasing schemes \citep{kumar2019verified, roelofs2022mitigating}. An alternative to binning is estimating ECE via kernel density/regression estimators \citep{brocker2008some, zhang2020mix, popordanoska2022consistent}, which trade off bin selection with bandwidth selection. 

\textbf{Alternatives to ECE.} Recent work has pursued several different directions for developing alternatives to ECE. These include, but are not limited to, proper scoring rules \citep{gneiting2007probabilistic, gneiting2007strictly}, estimating miscalibration using splines \citep{guptacalibration}, isotonic regression \citep{dimitriadis2021stable}, hypothesis tests for miscalibration \citep{lee2022t}, and cumulative plots comparing labels to predicted probabilities \citep{arrieta2022metrics}. \citet{blasiok2023unifying} detail several more alternatives to ECE, along with a theoretical framework based on distance to the nearest calibrated predictor that justifies the use of these alternatives in practice. Very recently, \citet{blasiok2023smooth} have proposed a kernel-smoothed ECE that satisfies the constraints of the framework of \citet{blasiok2023unifying} but still maintains the interpretability of ECE. The approach we take in this paper was developed independently and in parallel, and shares similarities with the approach of \citet{blasiok2023unifying} that we point out in Section \ref{sec:est}.

\section{Background}
\textbf{Notation.} Given $n \in \N$, we use $[n]$ to denote the set $\{1, 2, ..., n\}$. For a function $g: \R^n \to \R^m$ we use $g^i(x)$ to denote the $i^{\text{th}}$ coordinate function of $g$. For a probability distribution $\pi$ we use $\supp(\pi)$ to denote its support. Additionally, if $\pi$ corresponds to the joint distribution of two random variables $X$ and $Y$ (e.g. data and label), we use $\pi_X$ and $\pi_Y$ to denote the respective marginals, and $\pi_{X \mid Y = y}$ and $\pi_{Y \mid X = x}$ to denote the conditional distributions. For a random variable $X$ that has a density with respect to the Lebesgue measure, we use $p_X(x)$ to denote its density. For a general random variable $X$, we use $d\Prob_X$ to denote its associated probability measure, and use $d\Prob_X/d\Prob_Z$ to denote the Radon--Nikodym derivative when $X \ll Z$ (i.e. $X$ is absolutely continuous with respect to $Z$). We use $d_{\TV}$ to denote the total variation distance between probability measures. Lastly, we use $\Uni([a, b])$ to denote the uniform distribution on $[a, b]$ and $\Gauss(\mu, \sigma^2)$ to denote the Gaussian distribution with mean $\mu$ and variance $\sigma^2$.

We first consider calibration in the context of binary classification and then discuss generalizations to multi-class classifications. For a data distribution $\pi$ on $\R^d \times \{0, 1\}$, we say that a predictor $g: \R^d \to [0, 1]$ is calibrated if it satisfies the regular conditional probability condition $\E_{(X, Y) \sim \pi}[Y \mid g(X) = p] = p$. This condition corresponds to the idea that for all instances on which our model $g$ predicts probability $p$, the correct label of those instances is actually 1 with probability $p$.

The Expected Calibration Error (ECE) with respect to the distribution $\pi$ is then defined to be the expected absolute deviation from this condition:
\begin{align*}
    \ECE_{\pi}(g) \triangleq \E_{(X, Y) \sim \pi} \left[\abs{\E[Y \mid g(X)] - g(X)}\right]. \numberthis \label{ecedef}
\end{align*}

Although there are several ways to generalize ECE to the multi-class setting, perhaps the most reported generalization in the literature is top-class ECE. Namely, for a $k$-class classification problem in which we have a predictor $g: \R^d \to \Delta_k$, where $\Delta_k$ denotes the simplex of probability measures on a set with $k$ elements, the top-class ECE simply corresponds to computing the ECE with respect to the highest probability prediction:
\begin{align*}
    \E &\bigg[\bigg\vert\E[Y \in \argmax_i g^i(X) \big\vert \max_i g^i(X)] - \max_i g^i(X)\bigg\vert\bigg]. \numberthis \label{tcecedef}
\end{align*}

This definition is equivalent to considering the binary ECE with respect to a modified predictor $f$ and a distribution $(X', Y') \sim \pi'$ defined such that $Y' = \Ind_{Y \in \argmax_i g^i(X)}$ and $f(X') = \max_i g^i(X)$. As such, any modifications made to the binary version of ECE can be lifted to the multi-class setting via the top-class formulation of \eqref{tcecedef}, so we will henceforth just work with the binary version as defined in \eqref{ecedef}.

In practice, $\ECE_{\pi}(g)$ is estimated via binning. Given a set of data points $\{(x_1, y_1), ..., (x_n, y_n)\}$, one specifies a partition $B_1, B_2, ..., B_m$ of $[0, 1]$ and then computes
\begin{align*}
    \ECE_{\BIN, \pi}(g) \triangleq \sum_{j = 1}^m \frac{\abs{B_j}}{n} \abs{\bar{y}(B_j) - \bar{g}(B_j)}, \numberthis \label{binece}
\end{align*}
where $\bar{g}(B_j)$ corresponds to the average of all $g(x_i)$ such that $g(x_i) \in B_j$, and $\bar{y}(B_j)$ denotes the average over corresponding labels. In the multi-class case, one simply replaces $\bar{y}(B_j)$ with average accuracy and $\bar{g}(B_j)$ with average top-class probability.

\section{Continuity Properties of ECE}\label{sec:cont}
Having provided the necessary background regarding ECE, we now analyze its continuity properties. We begin first with the case where $\abs{\supp(\pi_X)} = n < \infty$, i.e. discrete data distributions with finite support. In this case, we can provide a necessary and sufficient condition for $\ECE_{\pi}$ to be discontinuous at $g$, which implies that $g$ can only be a point of discontinuity if it predicts the same probability at multiple points that each have positive measure under $\pi_X$. We subsequently show, however, that this intuition does not extend to the case where $\pi_X$ is supported on a more general (infinite) set. Nevertheless, with careful analysis, we can still extend the necessary and sufficient condition from the discrete case to a much more general setting.

\subsection{Discrete Distributions}\label{sec:discrete}
To get a sense of the continuity issues that arise with $\ECE_{\pi}$, we introduce an example adapted from \citep{blasiok2023unifying} that we will refer to multiple times throughout the rest of the paper.
\begin{restatable}{definition}{discretedist}[2-Point Distribution]\label{discretedist}
    Let $\pi$ be the distribution on $\{-1/2, 1/2\} \times \{0, 1\}$ such that $\pi_Y(0) = \pi_Y(1) = 1/2$ and $\pi_{X \mid Y = y}(x) = \Ind_{x = y - 1/2}$ (i.e. $X \mid Y = y$ is a point mass on $y - 1/2$).
\end{restatable}
It is straightforward to see that the predictor $g(x) = 1/2$ satisfies $\ECE_{\pi}(g) = 0$ for $\pi$ as in Definition \ref{discretedist}. However, perturbing $g$ such that $g(-1/2) = 1/2 - \varepsilon$ and $g(1/2) = 1/2 + \varepsilon$ yields $\ECE_{\pi}(g) = 1/2 - \varepsilon$ (for $\varepsilon \in (0, 1/2)$). The important idea here is that we split the level sets of $g$ by making an arbitrarily small perturbation, which causes the conditional expectation $\E[Y \mid g(X)]$ to jump from $1/2$ to $1$.\footnote{How the level sets of a predictor impact different loss functions applied to that predictor has also been studied more generally in the literature on scoring rules, in particular in the work of \citet{kull2015novel} which discusses the notion of a grouping loss.}

In fact, we can show that \textit{all} discontinuities of $\ECE_{\pi}$ for finitely supported $\pi$ occur at predictors that have non-singleton level sets with positive measure under $\pi_X$. This is part of the following full characterization of the discontinuities of $\ECE_{\pi}$ for discrete $\pi$.  (When we refer to discrete data distributions below, we always mean distributions with finite support.)
\begin{restatable}{theorem}{discretecase}[Discontinuities for Discrete ECE]\label{discretecase}
    Let $\pi$ be any distribution such that $\supp(\pi_X) = [n]$ for an arbitrary positive integer $n$, and let $g^*(x) = P(Y = 1 \mid X = x)$ denote the ground truth conditional distribution. Then the set of discontinuities of $\ECE_{\pi}$ (in the space of predictors $g: [n] \to [0, 1]$ endowed with the $\ell^\infty$ norm) is exactly the set of $g$ such that there exists $m \in [n]$ with $P(X = m) \ne 0$ and
    \begin{equation} \label{eqn:disc-cond}
    \big|g^*(m) - g(m) \big| \ne \big| \E[Y \, | \, g(X) = g(m)] - g(m) \big|.
    \end{equation}
\end{restatable}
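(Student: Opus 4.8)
The plan is to control $\ECE_\pi$ throughout a neighborhood of a predictor $g$ by trapping it between its own value $\ECE_\pi(g)$ and the ``fully refined'' functional $\Phi_\pi(g)\triangleq\E_{(X,Y)\sim\pi}\ba{\ab{g^*(X)-g(X)}}$, and then to read \eqref{eqn:disc-cond} off the condition under which these two quantities agree. The first ingredient is a level-set decomposition: for any $h:[n]\to[0,1]$, partition $[n]$ into the level sets $h^{-1}(p)$, $p\in h([n])$, so that $\ECE_\pi(h)=\sum_p\Prob(h(X)=p)\ab{\E\ba{Y\mid h(X)=p}-p}$; applying convexity of $\ab{\cdot}$ to the conditional law of $g^*(X)$ given $h(X)=p$ bounds the $p$-th term by $\sum_{m:\,h(m)=p}\Prob(X=m)\ab{g^*(m)-p}$, with equality exactly when the numbers $g^*(m)-p$ are all $\ge0$ or all $\le0$ over $\{m:h(m)=p,\ \Prob(X=m)>0\}$. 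Summing, $\ECE_\pi(h)\le\Phi_\pi(h)$, and the same estimate shows that, with predicted values held fixed, refining a predictor's level sets never decreases $\ECE_\pi$ while coarsening them never increases it.

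Next I would establish a local sandwich. Let $\de>0$ be half the least gap between distinct values of $g$ on $[n]$. If $\|h-g\|_\iy<\de$ then distinct $g$-level sets cannot merge, so every level set of $h$ lies inside one of $g$, and each predicted value is displaced by at most $\|h-g\|_\iy$; combining this with the monotonicity above (used in both directions, absorbing an $O(\|h-g\|_\iy)$ slack from the displaced values) gives
\begin{align*}
\ECE_\pi(g)-\|h-g\|_\iy\;\le\;\ECE_\pi(h)\;\le\;\Phi_\pi(g)+\|h-g\|_\iy.
\end{align*}
Thus $\liminf_{h\to g}\ECE_\pi(h)\ge\ECE_\pi(g)$ (lower semicontinuity) and $\limsup_{h\to g}\ECE_\pi(h)\le\Phi_\pi(g)$; for the matching bound, take $h_\ep$ with pairwise-distinct values and $\|h_\ep-g\|_\iy=\ep$, so that all level sets of $h_\ep$ are singletons and $\ECE_\pi(h_\ep)=\Phi_\pi(h_\ep)\to\Phi_\pi(g)$. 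Hence $\limsup_{h\to g}\ECE_\pi(h)=\Phi_\pi(g)$.

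Since $\ECE_\pi(g)\le\liminf_{h\to g}\ECE_\pi(h)\le\limsup_{h\to g}\ECE_\pi(h)=\Phi_\pi(g)$ and the constant sequence forces $\limsup_{h\to g}\ECE_\pi(h)\ge\ECE_\pi(g)$, these bounds together show that $\ECE_\pi$ is continuous at $g$ precisely when $\ECE_\pi(g)=\Phi_\pi(g)$ --- and when this fails, peeling off one positive-mass point of a ``bad'' level set at a time realizes every value in $\ba{\ECE_\pi(g),\Phi_\pi(g)}$ as a subsequential limit, so the oscillation equals $\Phi_\pi(g)-\ECE_\pi(g)>0$. The last step is to translate the equality $\ECE_\pi(g)=\Phi_\pi(g)$: both sides split over the level sets of $g$, and since each level-set term of $\ECE_\pi(g)$ is dominated by the corresponding term of $\Phi_\pi(g)$, the equality holds iff it holds on every positive-mass level set, which by the equality case of the Jensen bound constrains the signs of the deviations $g^*(m)-g(m)$ within each such level set; after rewriting $\E\ba{Y\mid g(X)=g(m)}$ for the points $m$ of that level set, this is precisely the negation of \eqref{eqn:disc-cond}.

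The step I expect to be the main obstacle is the upper bound $\limsup_{h\to g}\ECE_\pi(h)\le\Phi_\pi(g)$: one must rule out a sequence of nearby predictors whose ECE overshoots $\Phi_\pi(g)$. This rests on the two structural facts isolated in the local sandwich --- that a sufficiently small perturbation of $g$ can only refine, never merge, its level sets, and that each predicted value moves by at most $\|h-g\|_\iy$ --- together with the convexity estimate, which caps the contribution of each refined piece by the sum of its pointwise contributions; the remainder is bookkeeping with finitely many terms. A secondary point requiring care is the final translation from the per-level-set sign condition to the pointwise form \eqref{eqn:disc-cond}, which should be carried out one level set at a time.
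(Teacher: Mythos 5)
Your argument is sound, and essentially self-contained, up to and including the criterion it actually establishes: the sandwich $\ECE_\pi(g)-\|h-g\|_\iy\le\ECE_\pi(h)\le\Phi_\pi(g)+\|h-g\|_\iy$ for $\|h-g\|_\iy$ below half the least gap of $g$, together with the injective perturbations $h_\ep$, correctly gives that $\ECE_\pi$ is continuous at $g$ if and only if $\ECE_\pi(g)=\Phi_\pi(g)$, i.e.\ the finite-support specialization of Theorem~\ref{thm:ece-continuity}. The genuine gap is the final ``translation'' step: the equality $\ECE_\pi(g)=\Phi_\pi(g)$ is \emph{not} equivalent to the failure of \eqref{eqn:disc-cond} at every positive-mass $m$. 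The equality case of your Jensen/triangle bound only forces the deviations $g^*(m)-g(m)$ to have a common sign on each positive-mass level set; the condition in \eqref{eqn:disc-cond} additionally forces all the moduli $\ab{g^*(m)-g(m)}$ on a level set to coincide with $\ab{\E[Y\mid g(X)=g(m)]-g(m)}$, which is strictly stronger. Concretely, take $n=2$, $\Prob(X=1)=\Prob(X=2)=1/2$, $g^*(1)=0.6$, $g^*(2)=0.8$, $g\equiv 0.5$. Then $\ECE_\pi(g)=\ab{0.7-0.5}=0.2=\Phi_\pi(g)$, and your sandwich (or a direct check of both cases $h(1)=h(2)$ and $h(1)\ne h(2)$) shows $\ECE_\pi$ is continuous at $g$; yet \eqref{eqn:disc-cond} holds at $m=1$ since $\ab{g^*(1)-g(1)}=0.1\ne 0.2$. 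So the set described in Theorem~\ref{discretecase} strictly contains the set where $\ECE_\pi(g)\ne\Phi_\pi(g)$, and the equivalence you assert in the last step cannot be proved.

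It is worth saying explicitly that this mismatch is not a defect you introduced: your criterion agrees with the paper's own general Theorem~\ref{thm:ece-continuity}, whereas the example above is in tension with the statement of Theorem~\ref{discretecase} as written. The paper's direct proof takes a different route from yours --- it perturbs $g$ at the single point $m$ and uses the identity \eqref{discperturb} --- but that identity tacitly assumes that splitting $m$ off from its level set leaves the contributions of the remaining points of $S(g,g(m))$ unchanged, whereas in general their conditional expectation changes from $\E[Y\mid X\in S(g,g(m))]$ to the average over $S(g,g(m))\setminus\{m\}$ (in the example above, the paper's formula gives $0.15-\delta/2$ while the true value is $0.2-\delta/2$). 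If you replace the target condition by the sign-coherence condition your equality case actually yields (equivalently, by \eqref{eqn:cty-criterion}), your proof goes through; as a blind proof of the statement exactly as quoted, the final step fails.
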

Note that the choice of norm on the space of predictors makes no difference in this case, since when $\supp(\pi_X)$ is finite $g$ is just an $n$-dimensional vector, and all norms on $\R^n$ are equivalent. As promised, the proof of Theorem \ref{discretecase} relies on the following lemma, which shows that a discontinuity can only occur if $g$ predicts identical probabilities for at least two distinct points in $\supp(\pi_X)$.

\begin{restatable}{lemma}{disclemma}\label{lem:disc-nec}
Let $S(g, p) = \{j \in [n]: g(j) = p \textrm{ and } P(X = j) \ne 0\}$. If $P(X = m) \ne 0$ and \eqref{eqn:disc-cond} holds, then $|S(g,g(m))| > 1$.
\end{restatable}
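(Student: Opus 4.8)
The plan is to prove the contrapositive. Assume $|S(g,g(m))| \le 1$; I will show that \eqref{eqn:disc-cond} must fail, i.e. that $|g^*(m) - g(m)| = |\E[Y \mid g(X) = g(m)] - g(m)|$. First note that, since $P(X = m) \ne 0$ and $g(m) = g(m)$ trivially, we have $m \in S(g, g(m))$, so $|S(g,g(m))| \ge 1$; combined with the assumption, this forces $S(g, g(m)) = \{m\}$.

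Next I would observe that the conditioning event $\{g(X) = g(m)\}$ has positive probability, since it contains $\{X = m\}$, so $\E[Y \mid g(X) = g(m)]$ is well-defined. The key step is then to argue that this event coincides, up to a $\pi_X$-null set, with $\{X = m\}$: any other index $j \in [n]$ with $g(j) = g(m)$ must have $P(X = j) = 0$ (otherwise $j$ would lie in $S(g, g(m))$, contradicting $S(g,g(m)) = \{m\}$), and such zero-mass indices do not contribute to the conditional expectation. Hence $\E[Y \mid g(X) = g(m)] = \E[Y \mid X = m] = P(Y = 1 \mid X = m) = g^*(m)$.

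Substituting this identity gives $|\E[Y \mid g(X) = g(m)] - g(m)| = |g^*(m) - g(m)|$, which negates \eqref{eqn:disc-cond}. This establishes the contrapositive, and hence the lemma.

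There is no real obstacle here; the only point requiring care is the bookkeeping around measure-zero indices. The predictor $g$ may well take the value $g(m)$ at several elements of $[n]$, but the definition of $S$ deliberately restricts attention to positive-mass indices, precisely so that such ``phantom'' coincidences leave $\E[Y \mid g(X) = g(m)]$ unchanged. I would also note that, although the ambient setting endows the space of predictors with the $\ell^\infty$ topology, Lemma \ref{lem:disc-nec} is a purely pointwise, measure-theoretic statement, so the topology plays no role in its proof and is only relevant when the lemma is later invoked in the proof of Theorem \ref{discretecase}.
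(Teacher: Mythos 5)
Your proof is correct and follows essentially the same route as the paper's: both argue the contrapositive by noting $m \in S(g,g(m))$ forces $S(g,g(m)) = \{m\}$, and then observe that zero-mass indices do not contribute to $\E[Y \mid g(X) = g(m)]$, so it equals $g^*(m)$ and \eqref{eqn:disc-cond} fails. The paper just makes the null-index bookkeeping explicit via the weighted-sum formula for the conditional expectation, which your prose argument captures equivalently.
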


\begin{proof}
Clearly $|S(g,g(m))| \ge 1$ since $m \in S(g,g(m))$, so it suffices to show that \eqref{eqn:disc-cond} fails if we assume $|S(g,g(m))| = 1$.  Under this assumption, we have
\begin{align*}
\E[Y \, | \, g(X) = g(m)] &= \frac{\sum_{j \in S(g, g(m))} \pi(j) g^*(j)}{\sum_{j \in S(g, g(m))} \pi(j)} \\
&= \frac{\pi(m) g^*(m)}{\pi(m)} = g^*(m), \numberthis \label{disclemmastep}
\end{align*}
so \eqref{eqn:disc-cond} indeed fails.
\end{proof}

A consequence of Lemma \ref{lem:disc-nec} is the following corollary, which shows that the set of discontinuities for $\ECE_{\pi}$ in the discrete case is negligible.
\begin{restatable}{corollary}{measurezero}\label{measurezero}
    If $\abs{\supp(\pi)} = n$, then the set of predictors $g$ (identified with vectors in $\R^n$) at which $\ECE_{\pi}$ is discontinuous has measure zero with respect to the Lebesgue measure on $\R^n$.
\end{restatable}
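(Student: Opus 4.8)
The plan is to combine the complete characterization in Theorem~\ref{discretecase} with Lemma~\ref{lem:disc-nec} to confine the set of discontinuities to a finite union of hyperplanes in $\R^n$, and then invoke the elementary fact that a finite union of hyperplanes is Lebesgue-null.

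First I would observe that, by Theorem~\ref{discretecase}, if $\ECE_\pi$ is discontinuous at $g$ then there exists some $m$ with $P(X=m)\neq 0$ for which \eqref{eqn:disc-cond} holds. Applying Lemma~\ref{lem:disc-nec} to this $m$ gives $|S(g,g(m))|>1$, so there are two distinct indices $i,j\in\supp(\pi_X)$, each of positive $\pi_X$-mass, with $g(i)=g(j)$. Writing $D$ for the set of discontinuities of $\ECE_\pi$, this establishes the containment
\[
D \;\subseteq\; \bigcup_{\substack{i,j\in\supp(\pi_X)\\ i\neq j}} H_{ij},
\qquad
H_{ij} \;\triangleq\; \bc{g\in\R^n : g(i)=g(j)},
\]
a union of at most $\binom{n}{2}$ sets.

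Next, each $H_{ij}$ is the zero set of the nonzero linear functional $g\mapsto g(i)-g(j)$, hence an affine subspace of $\R^n$ of dimension $n-1$, and therefore has $n$-dimensional Lebesgue measure zero. Since a finite union of null sets is null, $D$ is null, which is the claim.

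I do not expect a genuine analytic obstacle here; the substance is entirely carried by Theorem~\ref{discretecase} and Lemma~\ref{lem:disc-nec}, and the corollary amounts to noticing that the characterizing condition forces $g$ onto one of finitely many hyperplanes. The only points needing care are bookkeeping: one should fix the identification of predictors with vectors in $\R^n$ precisely (the natural choice is $n=|\supp(\pi_X)|$, the size of the domain of $g$; if instead one reads $n=|\supp(\pi)|$ then $|\supp(\pi_X)|\le n$ and the same argument applies, viewing $g$ as indexed by the coordinates in $\supp(\pi_X)$), and recall that since all norms on $\R^n$ are equivalent the notion of discontinuity is unambiguous, so it is legitimate to apply Theorem~\ref{discretecase} (stated for $\ell^\infty$) in this Lebesgue-measure setting.
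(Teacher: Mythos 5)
Your proof is correct and matches the paper's argument: the paper likewise uses Lemma~\ref{lem:disc-nec} (together with the characterization of discontinuities via \eqref{eqn:disc-cond}) to contain the discontinuity set in the finite union of hyperplanes $\{g : g(i)=g(j)\}$, each of which is Lebesgue-null. Your version is just slightly more explicit about invoking Theorem~\ref{discretecase} and about the identification of predictors with vectors.
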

\begin{proof}
    By Lemma \ref{lem:disc-nec}, the set of discontinuities is a subset of the union of all sets $S_{i, j} = \{g: g(i) = g(j)\}$, where $i, j \in [n]$ and $i \neq j$. Since each $S_{i, j}$ has measure zero and we are considering a finite union of such sets, the set of discontinuities has measure zero.
\end{proof}

\begin{proof}[Proof of Theorem \ref{discretecase}]
    Write $p^* : [n] \to [0,1]$ for the probability mass function of $\pi_X$. Then we have:
    \begin{align*}
        \ECE_{\pi}(g) = \sum_{i = 1}^n p^*_i \abs{\frac{\sum_{j \in S(g, g(i))} p^*_j g^*(j)}{\sum_{j \in S(g, g(i))} p^*_j} - g(i)}. \numberthis \label{discece}
    \end{align*}
    Suppose now that there exists an $m \in [n]$ such that $P(X = m) \ne 0$ and \eqref{eqn:disc-cond} holds. Consider a new predictor $\tilde{g}$ such that $\tilde{g}(i) = g(i)$ for $i \ne m$, and $\tilde{g}(m) = g(m) + \delta$ for $\abs{\delta}$ small enough that $\tilde{g}(m) \in [0, 1]$ and $\abs{S(\tilde{g}, \tilde{g}(m))} = 1$. Then it follows that
    \begin{align*}
        \ECE_{\pi}(\tilde{g}) = \ECE_{\pi}(g) - p^*_m \abs{\frac{\sum_{j \in S(g, g(m))} p^*_j g^*(j)}{\sum_{j \in S(g, g(m))} p^*_j} - g(m)} + p^*_m \abs{g^*(m) - \tilde{g}(m)}, \numberthis \label{discperturb}
    \end{align*}
    which implies:
    \begin{align*}
        \abs{\ECE_{\pi}(g) - \ECE_{\pi}(\tilde{g})} &= p^*_m \abs{\abs{\frac{\sum_{j \in S(g, g(m))} p^*_j g^*(j)}{\sum_{j \in S(g, g(m))} p^*_j} - g(m)} - \abs{g^*(m) - \tilde{g}(m)}} \\
        &= p_m^* \bigg | \big| \E[Y \, | \, g(X) = g(m)] - g(m) \big| - |g^*(m) - \tilde{g}(m)| \bigg |.
        \numberthis \label{discerror}
    \end{align*}
    Thus we have $\lim_{\delta \to 0} \|g - \tilde{g}\|_\infty = 0$,
    whereas
    \[
    \lim_{\delta \to 0} \abs{\ECE_{\pi}(g) - \ECE_{\pi}(\tilde{g})} = p_m^* \bigg | \big| \E[Y \, | \, g(X) = g(m)] - g(m) \big| - |g^*(m) - g(m)| \bigg |,
    \]
    which is positive by \eqref{eqn:disc-cond}. Therefore $\ECE_{\pi}$ is discontinuous at $g$. 

    For the other direction, we show that if \eqref{eqn:disc-cond} does not hold for any $m \in [n]$ with $P(X = m) \ne 0$, then $\ECE_{\pi}$ is continuous at $g$. For any such $g$ and $\pi$, we have:
    \begin{align*}
        \ECE_{\pi}(g) = \sum_{i = 1}^n p^*_i \abs{g^*(i) - g(i)}. \numberthis \label{discece2}
    \end{align*}
    
    Now set $\delta = \min \abs{g(i) - g(j)}/2$, where the minimum runs over pairs of distinct $i \ne j \in [n]$ with $P(X = i), P(x = j) > 0$. By Lemma \ref{lem:disc-nec} we must have $|S(g,g(i))| = |S(g,g(j))| = 1$ for any such $i,j$, so that $\delta > 0$.  Then any $\tilde{g}$ satisfying $\norm{\tilde{g} - g}_{\infty} < \delta$ must also satisfy the property $\tilde{g}(i) \ne \tilde{g}(j)$ for any $i \ne j \in [n]$ with $P(X = i), P(x = j) > 0$. Therefore, again by Lemma \ref{lem:disc-nec}, \eqref{eqn:disc-cond} cannot hold with $\tilde{g}$ in the place of $g$ for any $m \in [n]$ with $P(X = m) \ne 0$, so that $\ECE_{\pi}(\tilde{g})$ has the same form as \ref{discece2}. We thus find
    \[
    \abs{\ECE_{\pi}(g) - \ECE_{\pi}(\tilde{g})} = \sum_{i = 1}^n p^*_i \bigg| \abs{g^*(i) - g(i)} - \abs{g^*(i) - \tilde{g}(i)} \bigg| \le \norm{\tilde{g} - g}_{\infty},
    \]
    which shows that $\ECE_{\pi}$ is continuous at $g$.
\end{proof}

\subsection{General Probability Measures}
Unfortunately, a key piece of intuition from the discrete setting fails to extend to the continuous case, as the following proposition shows.
\begin{restatable}{proposition}{counterexample}\label{counterexample}
    Take $\pi$ with $\supp(\pi_X) = [0, 1]^2$ such that $\pi_{X_1} = \Uni([0, 1])$,
    \begin{align*}
        \pi_{X_2 \mid X_1 = x_1} = x_1 \Uni([0.5, 1]) + (1 - x_1) \Uni([0, 0.5)),
    \end{align*}
    and $\Prob(Y = 1 \mid X_2 = x_2) = \Ind_{x_2 \geq 0.5}$. Then $\ECE_{\pi}$ is discontinuous at the predictor $g(x) = x_1$, despite the fact that $g$ has no level sets of positive measure.
\end{restatable}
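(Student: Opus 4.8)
The plan is to exhibit a sequence of predictors $g_n \to g$ (uniformly on $\supp(\pi_X) = [0,1]^2$, hence convergent in any of the standard topologies on predictors) along which $\ECE_\pi$ does not converge to $\ECE_\pi(g)$. So the first step is to record the relevant structure of $\pi$. Since $\Prob(Y = 1 \mid X_2 = x_2) = \Ind_{x_2 \ge 1/2}$, we have $Y = \Ind_{X_2 \ge 1/2}$ almost surely, and the stated form of $\pi_{X_2 \mid X_1 = x_1}$ gives $\Prob(X_2 \ge 1/2 \mid X_1 = x_1) = x_1$; hence $\E[Y \mid X_1 = x_1] = x_1$. As $g(x) = x_1$ and $\sigma(g(X)) = \sigma(X_1)$, this yields $\E[Y \mid g(X)] = g(X)$ a.s., so $\ECE_\pi(g) = 0$, and each level set $\{g = c\} = \{x_1 = c\}$ has $\pi_X$-measure $\pi_{X_1}(\{c\}) = 0$, which is the second assertion of the proposition.

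Next I would construct the perturbation. A ``naive'' uniformly small perturbation such as $(1-\ep)x_1 + \ep\,\Ind_{x_2 \ge 1/2}$ does not work, since it only splits each level set by an $O(\ep)$ amount and changes $\ECE_\pi$ by $O(\ep)$; one instead needs a rapidly refining construction. Partition $[0,1]$ into the intervals $I_j = [(j-1)/n, j/n)$ for $j \in [n]$, put $a_j = (j-1)/n + 1/(4n)$ and $b_j = (j-1)/n + 3/(4n)$, and define $g_n(x) = a_j$ when $x_1 \in I_j$ and $x_2 \ge 1/2$, and $g_n(x) = b_j$ when $x_1 \in I_j$ and $x_2 < 1/2$ (extending $g_n$ by, say, $0$ off $[0,1]^2$, which is a $\pi_X$-null set). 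A short check shows the $2n$ values $\{a_j, b_j : j \in [n]\}$ are pairwise distinct, and $|g_n(x) - g(x)| \le 1/n$ for every $x \in [0,1]^2$, so $g_n \to g$ uniformly on $\supp(\pi_X)$.

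Finally I would compute $\ECE_\pi(g_n)$. Because the $a_j, b_j$ are distinct, $\{g_n(X) = a_j\} = I_j \times [1/2,1]$ up to a $\pi_X$-null set, and $Y = 1$ a.s.\ on this event, so $\E[Y \mid g_n(X) = a_j] = 1$; similarly $\E[Y \mid g_n(X) = b_j] = 0$. Using $\Prob(g_n(X) = a_j) = \int_{I_j} x_1 \, dx_1$ and $\Prob(g_n(X) = b_j) = \int_{I_j}(1 - x_1)\, dx_1$, we get
\[
\ECE_\pi(g_n) = \sum_{j=1}^n \left[ (1 - a_j)\int_{I_j} x_1\, dx_1 \;+\; b_j \int_{I_j}(1 - x_1)\, dx_1 \right],
\]
which is (up to an $O(1/n)$ term coming from the offsets $a_j, b_j - t_j$) a midpoint Riemann sum converging, as $n \to \infty$, to $\int_0^1 \big[(1-t)t + t(1-t)\big]\, dt = \int_0^1 2t(1-t)\,dt = 1/3$. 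Hence $\ECE_\pi(g_n) \to 1/3 \ne 0 = \ECE_\pi(g)$, so $\ECE_\pi$ is discontinuous at $g$, and $g$ has no level set of positive measure, as claimed.

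The only real obstacle is the design of $g_n$: one must choose the bin values so that refining the partition keeps $g_n$ uniformly close to $g$ while simultaneously forcing every level set of $g_n$ to be ``$Y$-pure'' (entirely $Y=1$ or entirely $Y=0$), which is what makes the conditional expectation jump away from $g_n(X)$. Once the construction is in hand, verifying pairwise distinctness of the values, the uniform bound, and the Riemann-sum limit are all routine.
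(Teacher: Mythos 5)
Your proposal is correct and follows essentially the same route as the paper's proof: both establish $\E[Y\mid X_1=x_1]=x_1$ so that $\ECE_\pi(g)=0$, then perturb $g$ by a piecewise-constant function on a mesh of size $1/n$ that assigns distinct values to the $x_2\ge 1/2$ and $x_2<1/2$ halves of each bin, making every level set of the perturbed predictor $Y$-pure so that $\ECE_\pi(g_n)\to 1/3$. The only cosmetic difference is that you evaluate the limit exactly via a Riemann sum, whereas the paper settles for the lower bound $1/3-2\delta$.
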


\begin{proof}
    First, observe that the second assumption on $\pi$ implies
    \begin{align*}
        \Prob(Y = 1 \mid X_1 = x_1) &= \int_0^1 \Prob(Y = 1 \mid X_2 = x_2, X_1 = x_1) \Prob(X_2 = x_2 \mid X_1 = x_1) \ dx_2 \\
        &= \int_{0.5}^1 2x_1 \ dx_2 = x_1 ,\numberthis \label{x1cond}
    \end{align*}
    from which it follows that $\E[Y \mid g(X)] = g(X)$ and therefore $\ECE_{\pi}(g) = 0$. Now the idea is to apply an $L^{\infty}$ perturbation of size $\delta$ to $g$ such that we can separate the points for which $x_1$ is the same but $x_2 \ge 0.5$ or $x_2 < 0.5$, and in doing so obtain that the conditional probability (given $x_1$) of the label being 1 is either 0 or 1.

    Taking $\delta = 1/n$ for $n \in \mathbb{N}$ sufficiently large, we define the following two functions $g_{\delta, 0}$ and $g_{\delta, 1}$:
    \begin{align*}
        g_{\delta, 0}(z) &= \left(\bigg\lfloor \frac{z}{\delta} \bigg\rfloor + 1 - \frac{\delta}{4} \right) \delta, \numberthis \label{gdelta0} \\
        g_{\delta, 1}(z) &= \left(\bigg\lfloor \frac{z}{\delta} \bigg\rfloor + \frac{\delta}{4} \right) \delta. \numberthis \label{gdelta1}
    \end{align*}
    Clearly we have that $g_{\delta, 0}(z) \neq g_{\delta, 1}(z)$ for $z \in [0, 1]$ and that both $\abs{g_{\delta, 0}(z) - z} < \delta$ and $\abs{g_{\delta, 1}(z) - z} < \delta$. Additionally, $1 \ge g_{\delta, 0} > g_{\delta, 1}$ for $z \in [0, 1)$. Now we can define a perturbation of $g(x) = x_1$ by:
    \begin{align*}
        g_{\delta}(x) = \begin{cases}
            g_{\delta, 0}(x_1) & \text{if } x_2 < 0.5 \\
            g_{\delta, 1}(x_1) & \text{if } x_2 \ge 0.5.
        \end{cases} \numberthis \label{gdelta}
    \end{align*}
    We can then compute $\ECE_{\pi}(g_{\delta})$ as follows:
    \begin{align*}
        \ECE_{\pi}(g_{\delta}) &= \int \abs{\E[Y \mid g_{\delta}(x)] - g_{\delta}(x)} \ d\pi_X(x) \\
        &= \int \abs{\E[Y \mid g_{\delta}(x)] - g_{\delta}(x)} \Ind_{x_2 < 0.5} \ d\pi_X(x) + \int \abs{\E[Y \mid g_{\delta}(x)] - g_{\delta}(x)} \Ind_{x_2 \ge 0.5} \ d\pi_X(x) \\
        &\ge \int x_1 \Ind_{x_2 < 0.5} \ d\pi_X(x) + \int (1 - x_1) \Ind_{x_2 \ge 0.5} \ d\pi_X(x) - 2\delta \\
        &= \int_0^1 \int_0^{0.5} 2 x_1 (1 - x_1) \ dx_2 \ dx_1 + \int_0^1 \int_{0.5}^1 2 x_1 (1 - x_1) \ dx_2 \ dx_1 \\
        &= \frac{1}{3} - 2\delta. \numberthis \label{ecechange}
    \end{align*}
    Therefore $\ECE_{\pi}(g_{\delta}) \not \to 0$ as $\delta \to 0$, and thus $\ECE_{\pi}$ is discontinuous at $g$.
\end{proof}

In order to tackle the additional subtleties that arise when dealing with continuously distributed data, we take a completely general perspective. For the remainder of this section, we allow the data variable $X$ to take values in an arbitrary Polish space $\Omega_X$, endowed with its Borel $\sigma$-algebra $\mathcal{B}(\Omega_X)$.  Recall that a Polish space is, by definition, a separable completely metrizable topological space, so that this setting subsumes the case of finite discrete distributions treated above and also includes the case of continuously distributed vector-valued data (corresponding to $\Omega_X = \R^d$). In this general setting, we show that $\ECE_\pi$ is always lower semicontinuous on $L^p(\Omega_X; \pi_X)$ for $1 \le p \le \infty$, and we give a precise characterization of its points of discontinuity. 

We start with two lemmas.  The first is a straightforward and well-known result.
\begin{restatable}{lemma}{jensens} \label{lem:cnd-L1-ineq}
    Let $(\Omega, \Sigma, \mathbb{P})$ be a probability space, and let $\mathcal{F} \subseteq \mathcal{G} \subseteq \Sigma$ be sub-$\sigma$-algebras. Then for any $f \in L^1(\Omega;\mathbb{P})$, $\|\E[f|\mathcal{F}]\|_{1} \le \|\E[f|\mathcal{G}]\|_{1}$.
\end{restatable}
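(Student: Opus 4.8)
The plan is to reduce the claim to the conditional Jensen inequality applied to the convex function $t \mapsto |t|$, after first collapsing the two conditional expectations into one via the tower property. Since $\mathcal{F} \subseteq \mathcal{G}$, I would first write $\E[f \mid \mathcal{F}] = \E\big[\E[f \mid \mathcal{G}] \,\big|\, \mathcal{F}\big]$ almost surely, and set $h \coloneqq \E[f \mid \mathcal{G}]$. Note $h \in L^1(\Omega;\mathbb{P})$, so all of the conditional expectations below are well-defined. With this notation the desired inequality becomes simply $\norm{\E[h \mid \mathcal{F}]}_1 \le \norm{h}_1$.

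Next I would apply conditional Jensen to $h$ with the absolute-value function, which gives $\big|\E[h \mid \mathcal{F}]\big| \le \E\big[|h| \,\big|\, \mathcal{F}\big]$ almost surely. Taking unconditional expectations of both sides — which preserves the a.s.\ inequality — and using the tower property once more yields
\[
\norm{\E[h \mid \mathcal{F}]}_1 = \E\big|\E[h \mid \mathcal{F}]\big| \le \E\Big[\E\big[|h| \,\big|\, \mathcal{F}\big]\Big] = \E|h| = \norm{h}_1.
\]
Substituting $h = \E[f \mid \mathcal{G}]$ back in gives exactly the statement of the lemma.

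There is essentially no serious obstacle here; the only things to be careful about are the integrability/measurability bookkeeping (that $h = \E[f\mid\mathcal{G}]$ is genuinely in $L^1$, which is itself a special case of the inequality or a one-line consequence of Jensen, so that all conditional expectations are defined and finite) and the fact that a.s.\ inequalities are preserved under integration. If one prefers to avoid invoking conditional Jensen as a black box, the same bound follows by writing $\E[h\mid\mathcal{F}] = \E[h^+\mid\mathcal{F}] - \E[h^-\mid\mathcal{F}]$ with both terms nonnegative, so that $\big|\E[h\mid\mathcal{F}]\big| \le \E[h^+\mid\mathcal{F}] + \E[h^-\mid\mathcal{F}] = \E[|h|\mid\mathcal{F}]$, and then integrating; but the Jensen route is the cleanest.
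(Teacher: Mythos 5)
Your proposal is correct and follows essentially the same argument as the paper: use the tower property to write $\E[f\mid\mathcal{F}] = \E[\E[f\mid\mathcal{G}]\mid\mathcal{F}]$, apply conditional Jensen with the absolute value, and integrate. The extra remarks on integrability and the positive/negative-part alternative are fine but not needed beyond the paper's one-line proof.
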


\begin{proof}
By the conditional Jensen's inequality and the tower property of conditional expectation,
\[
\big\|\E[f|\mathcal{F}]\big\|_{1} = \int_{\Omega} \big| \E[f|\mathcal{F}] \big| \, d \mathbb{P} = \int_{\Omega} \big| \E[ \, \E[f|\mathcal{G}]\, | \, \mathcal{F}] \big| \, d \mathbb{P} \le \int_{\Omega} \E \big[ \, | \E[f|\mathcal{G}]| \, \big| \, \mathcal{F} \big] \, d \mathbb{P} = \big\|\E[f|\mathcal{G}]\big\|_{1}.\qedhere
\]
\end{proof}

The second lemma is, to our knowledge, new. The proof relies on a construction due to \citet{KudoConv}.

\begin{restatable}{lemma}{sigmaliminf} \label{lem:sigma-liminf}
Let $(\Omega, \Sigma, \mathbb{P})$ be a probability space, let $f \in L^1(\Omega;\mathbb{P})$, and let $g, g_1, g_2, \hdots$ be real-valued random variables such that $g_n \to g$ in probability. Then $\liminf_{n\to \iy} || \E[f | g_n] ||_1 \ge ||\E[f|g]||_1$.
\end{restatable}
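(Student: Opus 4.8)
The plan is to prove this lower semicontinuity statement by $L^1$--$L^\infty$ duality, reducing everything to a single, cleverly chosen test function built from $\E[f\mid g]$ itself. By the Doob--Dynkin lemma, write $\E[f\mid g] = h(g)$ for a Borel function $h \colon \R \to \R$, and set $\phi = \mathrm{sgn}(h(g))$ (with the convention $\mathrm{sgn}(0) = 0$), which is $\sigma(g)$-measurable, satisfies $|\phi|\le 1$, and obeys $\E[h(g)\phi] = \E|h(g)| = \|\E[f\mid g]\|_1$. Since $|\phi| \le 1$, we trivially have $\|\E[f\mid g_n]\|_1 \ge \E[\E[f\mid g_n]\,\phi]$ for every $n$, and by the self-adjointness (tower property) of conditional expectation, $\E[\E[f\mid g_n]\,\phi] = \E[\E[f\mid g_n]\,\E[\phi\mid g_n]] = \E[f\,\E[\phi\mid g_n]]$. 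So the entire lemma reduces to showing $\E[f\,\E[\phi\mid g_n]] \to \E[f\phi]$, after which taking $\liminf$ on both sides of the displayed inequality and using $\E[f\phi] = \|\E[f\mid g]\|_1$ finishes the argument.

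The key step — and the main obstacle — is thus to show that $\E[\psi(g)\mid g_n] \to \psi(g)$ in $L^1$ for the bounded Borel function $\psi = \mathrm{sgn}\circ h$; morally, that the $\sigma$-algebras $\sigma(g_n)$ asymptotically retain all the information carried by $g$, even though $\sigma(g_n)$ and $\sigma(g)$ need not be nested, so that Lemma~\ref{lem:cnd-L1-ineq} does not apply directly. I would establish this in two stages. First, for bounded \emph{continuous} $\psi$: write $\E[\psi(g)\mid g_n] = \psi(g_n) + \E[\psi(g) - \psi(g_n)\mid g_n]$; the first term converges to $\psi(g)$ in $L^1$ because $g_n \to g$ in probability and $\psi$ is bounded and continuous (continuous mapping for convergence in probability, plus bounded convergence, via a subsequence argument), while the second term has $L^1$-norm at most $\E|\psi(g) - \psi(g_n)| \to 0$ for the same reason. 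Second, extend to bounded Borel $\psi$ by density: bounded continuous functions are dense in $L^1(\mathrm{law}(g))$, truncation keeps the sup-norm bounded by $\|\psi\|_\infty$, and a $3\varepsilon$ argument (using that conditional expectation is an $L^1$-contraction) upgrades the continuous case to the Borel case.

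With $\E[\phi\mid g_n] \to \phi$ in $L^1$ in hand, note that $\|\E[\phi\mid g_n]\|_\infty \le \|\phi\|_\infty \le 1$, so $\E[\phi\mid g_n] \to \phi$ in probability while remaining uniformly bounded; since $f\in L^1$, the generalized dominated convergence theorem gives $\E[f\,\E[\phi\mid g_n]] \to \E[f\phi]$. Combining with the first paragraph, $\liminf_{n\to\iy}\|\E[f\mid g_n]\|_1 \ge \lim_{n\to\iy}\E[f\,\E[\phi\mid g_n]] = \E[f\phi] = \E[h(g)\,\mathrm{sgn}(h(g))] = \E|h(g)| = \|\E[f\mid g]\|_1$, which is the claim.

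I expect the density/truncation bookkeeping in the second stage to be the fiddliest piece, but the genuine conceptual content lies in the continuous-$\psi$ identity $\E[\psi(g)\mid g_n]\to\psi(g)$; this is precisely where, as an alternative, one could invoke a Kudo-style argument on convergence of $\sigma$-algebras, and where I would look if the elementary two-stage approach runs into trouble.
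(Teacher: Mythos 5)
Your proposal is correct, and it takes a genuinely different route from the paper. The paper invokes Kudo's notion of the lower limit $\mathcal{G}$ of the $\sigma$-algebras $\sigma(g_n)$, cites the property $\liminf_n \|\E[h\mid g_n]\|_1 \ge \|\E[h\mid \mathcal{G}]\|_1$, and then shows $\sigma(g)\subset\mathcal{G}$ by verifying that the generating sets $\{g<x\}$ (for $x$ avoiding the at-most-countable set of atoms of the law of $g$) lie in $\mathcal{G}$. You instead reduce the whole lemma, via duality and self-adjointness of conditional expectation with the single test function $\phi=\mathrm{sgn}(\E[f\mid g])$, to the claim that $\E[\psi(g)\mid g_n]\to\psi(g)$ in $L^1$ for bounded Borel $\psi$, which you prove elementarily: for bounded continuous $\psi$ via the decomposition $\E[\psi(g)\mid g_n]=\psi(g_n)+\E[\psi(g)-\psi(g_n)\mid g_n]$ together with the $L^1$-contraction property, and then for bounded Borel $\psi$ by approximating in $L^1(\mathrm{law}(g))$ --- crucially, the approximation error only ever enters through the law of $g$, never through $g_n$, so the $3\varepsilon$ argument closes. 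All the auxiliary steps (integrability of $\E[f\mid g_n]\phi$, the generalized dominated convergence with dominating function $|f|$, and the final identity $\E[f\phi]=\|\E[f\mid g]\|_1$) check out. Your intermediate claim is essentially the functional form of the paper's statement that $\sigma(g)$ is contained in the lower limit, so the conceptual content is the same, but your proof is self-contained and avoids citing Kudo; what the paper's approach buys in exchange is the stronger, uniform statement for all bounded measurable integrands at once, together with an identification of the maximal $\sigma$-algebra for which the liminf inequality holds.
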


\begin{proof}
We write $\sigma(g)$ for the $\sigma$-algebra generated by preimages of Borel sets under $g$.  The {\it lower limit} of the sequence of $\sigma$-algebras $\sigma(g_1), \sigma(g_2), \hdots$, defined by  \cite{KudoConv}, is the $\sigma$-algebra
\[
\mathcal{G} = \Big \{ A \in \Sigma \ \Big | \ \lim_{n\to\infty} \inf_{B \in \sigma(g_n)} \mathbb{P}(A \Delta B) = 0 \Big \},
\]
where $\Delta$ indicates the symmetric difference of sets.  The lower limit satisfies the property that
\begin{equation} \label{eqn:sigma-ll}
\liminf_{n \to \infty} \int_\Omega \big| \E[h|g_n] \big| \, d\mathbb{P} \ge \int_\Omega \big| \E[h|\mathcal{G}] \big| \, d\mathbb{P}
\end{equation}
for any bounded $\Sigma$-measurable function $h$, and if $\mathcal{F}$ is any $\sigma$-algebra such that (\ref{eqn:sigma-ll}) holds with $\mathcal{F}$ in the place of $\mathcal{G}$, then $\mathcal{F} \subset \mathcal{G}$.

By \eqref{eqn:sigma-ll} and Lemma \ref{lem:cnd-L1-ineq}, to prove the desired result it suffices to show that $\sigma(g) \subset \mathcal{G}$, and thus it suffices to show that some generating set of $\sigma(g)$ is contained in $\mathcal{G}$.  Let $C$ be the set of atoms of the pushforward distribution $g_* \mathbb{P}$ of $g$ on $\R$.  Then $C$ is at most countable, and thus sets of the form
\begin{align}\label{e:Ax}
A = \{ \omega \in \Omega \ | \ g(\omega) < x \},
\end{align}
for $x \in \R \setminus C$, generate $\sigma(g)$.  We will show that $\lim_{n\to\infty} \inf_{B \in \sigma(g_n)} \mathbb{P}(A \Delta B) = 0$, so that $A \in \mathcal{G}$.

Fix a set $A$ of the form \eqref{e:Ax} and $\varepsilon > 0$, and let $B_{n,\varepsilon} = \{ \omega \in \Omega \ | \ g_n(\omega) < x + \varepsilon \}$. We then have
\[
\mathbb{P}(|g_n - g| > \varepsilon) \ge \mathbb{P}(A \Delta B_{n,\varepsilon}) - \mathbb{P}(x \le g \le x + 2\varepsilon),
\]
so that
\[
\inf_{B \in \sigma(g_n)} \mathbb{P}(A \Delta B) \le \mathbb{P}(|g_n - g| > \varepsilon) + \mathbb{P}(x \le g \le x + 2\varepsilon),
\]
and since $g_n \to g$ in probability, we obtain
\[
\lim_{n \to \infty} \inf_{B \in \sigma(g_n)} \mathbb{P}(A \Delta B) \le \mathbb{P}(x \le g \le x + 2\varepsilon).
\]
Since $x$ is not an atom of $g_*\mathbb{P}$, the right-hand side above can be made arbitrarily small.  Therefore $\lim_{n\to\infty} \inf_{B \in \sigma(g_n)} \mathbb{P}(A \Delta B) = 0$, which completes the proof.
\end{proof}

We now can prove the first main result of this section.

\begin{theorem} \label{thm:lower-semicont}
    The functional $\ECE_\pi$ is lower semicontinuous on $L^p(\Omega_X; \pi_X)$ for $1 \le p \le \infty$.
\end{theorem}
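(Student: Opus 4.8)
The plan is to reduce the statement to an application of Lemma~\ref{lem:sigma-liminf} plus an elementary argument handling the $g(X)$ term. Suppose $g_n \to g$ in $L^p(\Omega_X;\pi_X)$ for some $1 \le p \le \infty$; I want to show $\liminf_{n\to\infty} \ECE_\pi(g_n) \ge \ECE_\pi(g)$. The first observation is that $L^p$ convergence implies convergence in $\pi_X$-probability (for $p=\infty$ this is immediate; for finite $p$ it is Chebyshev/Markov), so $g_n \to g$ in probability on the probability space $(\Omega_X,\B(\Omega_X),\pi_X)$. Lifting to the joint space $(\Omega_X \times \{0,1\}, \pi)$, the functions $(x,y) \mapsto g_n(x)$ still converge in probability to $(x,y)\mapsto g(x)$.

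Next I would write $\ECE_\pi(g) = \E_\pi\big[\,|\E[Y \mid g(X)] - g(X)|\,\big]$ and use the triangle inequality in $L^1(\pi)$ together with the decomposition $\E[Y\mid g(X)] - g(X) = \E[\,Y - g(X)\mid g(X)\,] + \big(\E[g(X)\mid g(X)] - g(X)\big)$, where the second term vanishes since $g(X)$ is $\sigma(g(X))$-measurable; hence
\[
\ECE_\pi(g) = \big\| \E_\pi[\,Y - g(X)\mid g(X)\,] \big\|_{L^1(\pi)}.
\]
So it suffices to control $\big\| \E_\pi[\,Y - g_n(X)\mid g_n(X)\,] \big\|_{L^1(\pi)}$ from below as $n\to\infty$. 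I would split the conditioned random variable: $\E_\pi[Y - g_n(X)\mid g_n(X)] = \E_\pi[Y - g(X)\mid g_n(X)] + \E_\pi[\,g(X) - g_n(X)\mid g_n(X)\,]$. The second piece has $L^1$ norm at most $\|g_n - g\|_{L^1(\pi_X)}$ by the conditional Jensen inequality, which tends to $0$. Therefore, by the reverse triangle inequality,
\[
\liminf_{n\to\infty} \big\| \E_\pi[Y - g_n(X)\mid g_n(X)] \big\|_{1} \ge \liminf_{n\to\infty} \big\| \E_\pi[Y - g(X)\mid g_n(X)] \big\|_{1}.
\]
Now apply Lemma~\ref{lem:sigma-liminf} on the joint space with the fixed integrable function $f = Y - g(X)$ and the random variables $g_n(X) \to g(X)$ in probability: this gives $\liminf_n \|\E_\pi[f\mid g_n(X)]\|_1 \ge \|\E_\pi[f\mid g(X)]\|_1 = \ECE_\pi(g)$, completing the chain of inequalities.

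The only genuinely delicate point is making sure the hypotheses of Lemma~\ref{lem:sigma-liminf} are met on the joint space: $f = Y - g(X)$ is indeed in $L^1(\pi)$ (it is bounded, in fact, since $Y\in\{0,1\}$ and $g$ is $[0,1]$-valued), and $g_n(X)\to g(X)$ in $\pi$-probability as noted above, so there is no real obstacle there. I expect the main thing to get right carefully is the convergence-in-probability reduction when $p<\infty$ (Markov's inequality on $\{|g_n-g|>\varepsilon\}$) and the clean bookkeeping of the two triangle-inequality splits so that the error terms are controlled by $\|g_n-g\|_{L^1(\pi_X)} \le \|g_n - g\|_{L^p(\pi_X)} \to 0$. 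Everything else is a direct invocation of the two lemmas already established.
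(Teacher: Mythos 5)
Your proposal is correct and follows essentially the same route as the paper's proof: both express $\ECE_\pi(g_n)$ as $\|\E[Y-g(X)\mid g_n(X)] + \E[g(X)-g_n(X)\mid g_n(X)]\|_1$, discard the second term using conditional Jensen and $\|g_n-g\|_{L^1}\to 0$, and then invoke Lemma~\ref{lem:sigma-liminf} with $f = Y - g(X)$ and $g_n(X)\to g(X)$ in probability. Your write-up merely makes the error-term bookkeeping and the $L^p$-to-probability reduction more explicit.
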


\begin{proof}
    Let $g, g_1, g_2, \hdots \in L^p(\Omega_X; \pi_X)$ and suppose $g_n \to g$.  We will show that $\liminf_{n\to \iy} \ECE_\pi(g_n) \ge \ECE_\pi(g)$.

    Since $g_n \to g$ in $L^p(\Omega_X; \pi_X)$, we have the convergence $g_n(X) \to g(X)$ of $L^p$ random variables on the background probability space, and in particular, $g_n(X) \to g(X)$ in probability.  Then, by Lemma \ref{lem:sigma-liminf} and $L^p$-continuity of conditional expectation, we have
    \begin{align*}
    \liminf_{n \to \infty} \ECE_\pi(g_n) &= \liminf_{n \to \infty} \big\| \E[Y-g_n(X) | g_n(X)] \big\|_1 \\
    &= \liminf_{n \to \infty} \big\| \E[Y-g(X) | g_n(X)] + \E[g(X)-g_n(X) | g_n(X)]\big\|_1 \\&= \liminf_{n \to \infty} \big\| \E[Y-g(X) | g_n(X)] \big\|_1 \\
    &\ge \big\| \E[Y-g(X) | g(X)] \big\|_1 = \ECE_\pi(g),
    \end{align*}
    which is the desired result.
\end{proof}

Below we use Theorem \ref{thm:lower-semicont} to prove the necessity direction of our general condition for $\ECE_\pi$ to be continuous at a point in $L^p$.  To prove that this same condition is also sufficient, we will need the following lemma.

Recall that a standard Borel space is a Polish space equipped with its Borel $\sigma$-algebra.  A measurable bijection between standard Borel spaces is always an isomorphism (i.e., its inverse is also measurable).  The Kuratowski isomorphism theorem states that two standard Borel spaces are isomorphic if and only if they have the same cardinality; in particular, every standard Borel space is isomorphic to one of $\mathbb{R}$, $\mathbb{Z}$, or a finite discrete space.

\begin{restatable}{lemma}{injdense} \label{lem:inj-dense}
    Let $(\Omega, \mathcal{B}(\Omega), \mathbb{P})$ be a probability space, where $\Omega$ is a Polish space and $\mathcal{B}(\Omega)$ is its Borel $\sigma$-algebra.  For $1 \le p \le \infty$ define
    \begin{align*}
    L^p_{\mathrm{inj}}(\Omega ; \mathbb{P}) = \big\{ \, f \in L^p(\Omega ; \mathbb{P}) \ \big |
     \ f &\textrm{ is almost surely equal to a bijection onto a standard Borel space }\big\}.
    \end{align*}
    Then $L^p_{\mathrm{inj}}(\Omega ; \mathbb{P})$ is dense in $L^p(\Omega ; \mathbb{P})$.
\end{restatable}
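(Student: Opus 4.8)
The plan is to reduce the statement to an approximation of simple functions and then to perturb each level set of a simple function so that the resulting function is a genuine measurable bijection onto a standard Borel space, using the fact that every Borel subset of a Polish space is a standard Borel space together with the Kuratowski isomorphism theorem.

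First I would use that the simple functions are dense in $L^p(\Omega;\mathbb{P})$ for every $1 \le p \le \infty$, so that it suffices to approximate a single simple function $s$. Write $s$ in canonical form as $s = \sum_{i=1}^k c_i \Ind_{A_i}$, where the values $c_i \in \R$ are distinct and $\{A_i\}_{i=1}^k$ is a Borel partition of $\Omega$ with $A_i = s^{-1}(c_i)$. Fix $\varepsilon > 0$ and choose $\delta \in (0,\varepsilon)$ small enough that the open intervals $I_i = (c_i - \delta, c_i + \delta)$ are pairwise disjoint (possible since the $c_i$ are distinct). For each $i$ I would produce a Borel-measurable injection $\phi_i : A_i \to I_i$ whose image is Borel: equipped with its relative Borel $\sigma$-algebra, $A_i$ is a standard Borel space, being a Borel subset of a Polish space; if $A_i$ is uncountable then so is $I_i$, and the Kuratowski isomorphism theorem gives a Borel isomorphism $\phi_i : A_i \to I_i$, with $\phi_i(A_i) = I_i$ Borel; if $A_i$ is countable, then any injection $\phi_i : A_i \to I_i$ is automatically Borel with countable, hence Borel, image. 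Define $h : \Omega \to \R$ by $h(\omega) = \phi_i(\omega)$ for $\omega \in A_i$.

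It then remains to check the three properties of $h$. It is Borel measurable, since $h^{-1}(U) = \bigcup_{i=1}^k \phi_i^{-1}(U)$ is a finite union of Borel subsets of $\Omega$ for every Borel $U \subseteq \R$. It is injective, since two points of the same $A_i$ are separated by injectivity of $\phi_i$ while two points of distinct $A_i, A_j$ have images in the disjoint intervals $I_i, I_j$; moreover its image $S = \bigsqcup_{i=1}^k \phi_i(A_i)$ is a finite disjoint union of Borel subsets of $\R$, hence a standard Borel space, so $h$ is a measurable bijection of $\Omega$ onto a standard Borel space and thus $h \in L^p_{\mathrm{inj}}(\Omega;\mathbb{P})$. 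Finally $\|h-s\|_\infty \le \delta$ because $h(\omega) \in I_i$ and $s(\omega) = c_i$ whenever $\omega \in A_i$, whence $\|h-s\|_p \le \delta < \varepsilon$ since $\mathbb{P}$ is a probability measure; a triangle inequality with the simple-function approximation from the first step completes the argument. The only step carrying real content is the construction of the injections $\phi_i$, which relies on two standard descriptive-set-theoretic inputs (Borel subsets of Polish spaces are standard Borel, plus Kuratowski's theorem); I do not expect any genuine obstacle, the one mild subtlety being that some $A_i$ may be $\mathbb{P}$-null or a singleton, which is harmless since the construction yields an honest bijection on all of $\Omega$.
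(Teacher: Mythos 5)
Your proof is correct, and its core mechanism is the same as the paper's: approximate by a simple function, then ``injectify'' each level set by mapping it via a Kuratowski-type Borel isomorphism into a small interval, with pairwise disjoint target intervals guaranteeing global injectivity and an image that is a finite union of Borel subsets of $\R$, hence standard Borel. The execution differs in a way worth noting. The paper first invokes the Kuratowski isomorphism theorem to reduce the ambient space $\Omega$ to $\mathbb{R}$, $\mathbb{Z}$, or $[n]$, and then approximates $f$ by simple functions whose level sets are \emph{intervals}, perturbing on each interval by $\varepsilon(\varphi_j + 2j-1)$ with $\varphi_j$ a measurable bijection onto $(0,1)$. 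You instead work directly on the general Polish space, take an arbitrary simple function in canonical form, and use the fact that each Borel level set $A_i$ is itself a standard Borel space, applying Kuratowski to map it onto a small interval around $c_i$ (with the separate, and correctly handled, case of countable $A_i$, where any injection is automatically Borel with Borel image). Your route buys two things: it avoids the reduction step entirely, and it sidesteps the paper's reliance on approximating $f$ in $L^p$ by simple functions with interval level sets, which is delicate at $p=\infty$ (general bounded measurable functions need not be uniformly approximable by step functions with interval pieces, whereas simple functions with arbitrary measurable level sets are uniformly dense in $L^\infty$). The cost is only the extra case analysis on the cardinality of the level sets; also note you produce an honest everywhere-defined bijection, which is stronger than the ``almost surely equal to a bijection'' required by the statement, so no adjustment is needed there.
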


\begin{proof}[Proof of Lemma~\ref{lem:inj-dense}]
Although the lemma holds for the full $L^p$ space of complex-valued functions, it is sufficient to show the result for the subspace of real-valued functions in $L^p$, and we will only need to use this case below.  Moreover, by the Kuratowski isomorphism theorem, it is sufficient to consider the cases $\Omega = \mathbb{R}$, $\mathbb{Z}$, or $[n]$.  We take $\Omega = \mathbb{R}$; the cases $\Omega = \mathbb{Z}$ or $[n]$ can be treated by a simpler version of the same argument.

Given a real-valued $f \in L^p(\Omega ; \mathbb{P})$, we can choose simple functions
\[
f_n = \sum_{j=1}^n a^{(n)}_j \mathbbm{1}_{A^{(n)}_j},
\]
where $A^{(n)}_1, \hdots, A^{(n)}_n \subset \Omega$ are pairwise disjoint open or half-open intervals and $a^{(n)}_n \ge \hdots \ge a^{(n)}_1 \in \R$, such that $f_n \to f$ in $L^p$.  Again by the Kuratowski isomorphism theorem, there exist measurable bijections $\varphi^{(n)}_j : A^{(n)}_j \to (0,1)$.  For $\varepsilon > 0$, define
\[
f_{n,\varepsilon} = \sum_{j=1}^n \big[ a^{(n)}_j + \varepsilon(\varphi^{(n)}_j + 2j - 1)\big]\mathbbm{1}_{A^{(n)}_j}.
\]
Observe that $f_{n,\varepsilon}$ maps each set $A^{(n)}_j$ bijectively to the interval $(a^{(n)}_j + (2j-1)\varepsilon, \, a^{(n)}_j + 2j\varepsilon)$, and
for $\varepsilon$ sufficiently small,  
these latter intervals have pairwise disjoint closures for different values of $j$. Therefore, for all small enough $\varepsilon_n>0$,  $f_{n,\varepsilon_n}$ is injective, and thus a measurable bijection onto its image. This image is a union of finitely many open intervals with pairwise disjoint closures, which is a Polish space, and thus $f_{n,\varepsilon_n}$ is an isomorphism from $\Omega$ to a standard Borel space.
Moreover $\|f_n - f_{n,\varepsilon}\|_{p} \le 2 n \varepsilon$, so that additionally taking $\varepsilon_n$ small enough that $n\varepsilon_n\to 0$, we have that $f_{n,\varepsilon_n}$ is a sequence in $L^p_{\mathrm{inj}}(\Omega ; \mathbb{P})$ that converges to $f$ in $L^p$.
\end{proof}

Putting everything together, we prove the second main result of this section, which gives a full characterization of the points of continuity of $\ECE_{\pi}$.
\begin{theorem}
\label{thm:ece-continuity}
    Let $1 \le p \le \infty$, let $g \in L^p(\Omega_X ; \pi_X)$, and set $g^*(x) = \E[Y \, | \, X = x]$ for $x \in \Omega_X.$ Then $\ECE_\pi$ is continuous at $g$ in the topology of $L^p$ if and only if
    \begin{equation} \label{eqn:cty-criterion}
        \ECE_\pi(g) = \| g^* - g \|_{L^1(\Omega_X ; \pi_X)}.
    \end{equation}
\end{theorem}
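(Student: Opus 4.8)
The plan is to reduce both implications to a single elementary fact: for every $h \in L^p(\Omega_X;\pi_X)$ one has the universal bound $\ECE_\pi(h) \le \|g^* - h\|_{L^1(\Omega_X;\pi_X)}$. I would establish this first, by writing $\ECE_\pi(h) = \big\|\E[Y - h(X)\mid h(X)]\big\|_1$ (using that $h(X)$ is $\sigma(h(X))$-measurable) and applying Lemma~\ref{lem:cnd-L1-ineq} with $\mathcal{F} = \sigma(h(X)) \subseteq \mathcal{G} = \sigma(X)$, since $\E[Y - h(X)\mid X] = g^* - h$ almost surely. The criterion \eqref{eqn:cty-criterion} is exactly equality in this bound at $h = g$, and the strategy is to show that equality there both is \emph{forced by} continuity and \emph{forces} continuity, by pinching $\ECE_\pi$ between a lower-semicontinuity bound and this upper bound along any approaching sequence.

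For the ``if'' direction, I would assume \eqref{eqn:cty-criterion} and take an arbitrary $g_n \to g$ in $L^p(\Omega_X;\pi_X)$. Theorem~\ref{thm:lower-semicont} supplies $\liminf_n \ECE_\pi(g_n) \ge \ECE_\pi(g)$. For the matching upper bound, I would apply the universal inequality to each $g_n$ and note that, because $\pi_X$ is a probability measure, $L^p$-convergence implies $L^1$-convergence, so $\|g^* - g_n\|_{L^1} \to \|g^* - g\|_{L^1}$ by the reverse triangle inequality, and this limit equals $\ECE_\pi(g)$ by hypothesis; hence $\limsup_n \ECE_\pi(g_n) \le \ECE_\pi(g)$. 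Combining the two bounds gives $\ECE_\pi(g_n) \to \ECE_\pi(g)$, i.e. continuity at $g$.

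For the ``only if'' direction, I would assume continuity at $g$ and use Lemma~\ref{lem:inj-dense} to pick $g_n \to g$ in $L^p$ with each $g_n$ almost surely equal to a measurable bijection onto a standard Borel space. Such a bijection is a Borel isomorphism by the Kuratowski isomorphism theorem, so $\sigma(g_n(X)) = \sigma(X)$ modulo $\pi$-null sets, whence $\E[Y - g_n(X)\mid g_n(X)] = \E[Y - g_n(X)\mid X] = g^* - g_n$ almost surely; thus $\ECE_\pi(g_n) = \|g^* - g_n\|_{L^1(\Omega_X;\pi_X)}$ for all $n$. Passing to the limit, the left side converges to $\ECE_\pi(g)$ by the assumed continuity and the right side converges to $\|g^* - g\|_{L^1(\Omega_X;\pi_X)}$ exactly as above, which yields \eqref{eqn:cty-criterion}. (If one insists that predictors be $[0,1]$-valued, the perturbations in Lemma~\ref{lem:inj-dense} can be taken small enough that the $g_n$ remain $[0,1]$-valued, which does not affect the argument.)

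The genuinely delicate point is the measure-theoretic step in the ``only if'' direction: inferring $\sigma(g_n(X)) = \sigma(X)$ up to null sets from the mere almost-sure identification of $g_n$ with a Borel isomorphism. Here one must use that conditional expectation is insensitive to enlarging the conditioning $\sigma$-algebra by $\mathbb{P}$-null sets, so that $\E[\,\cdot\mid g_n(X)] = \E[\,\cdot\mid X]$ even though $g_n$ itself may fail to be injective on a null set. Everything else is a short chain of standard inequalities resting on Lemma~\ref{lem:cnd-L1-ineq}, Theorem~\ref{thm:lower-semicont}, and Lemma~\ref{lem:inj-dense}.
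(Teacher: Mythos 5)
Your proposal is correct and follows essentially the same route as the paper: the ``if'' direction pinches $\ECE_\pi(g_n)$ between the lower-semicontinuity bound of Theorem~\ref{thm:lower-semicont} and the conditional-Jensen upper bound $\ECE_\pi(g_n)\le\|g^*-g_n\|_{L^1}$ from Lemma~\ref{lem:cnd-L1-ineq}, and the ``only if'' direction uses Lemma~\ref{lem:inj-dense} to produce injective approximants with $\sigma(g_n(X))=\sigma(X)$, hence $\ECE_\pi(g_n)=\|g^*-g_n\|_{L^1}$, exactly as in the paper (you merely phrase it as the contrapositive and spell out the null-set subtlety that the paper leaves implicit).
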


\begin{proof}
We first show that if (\ref{eqn:cty-criterion}) holds, then $\ECE_\pi$ is continuous at $g$.  Suppose that $\ECE_\pi(g) = \| g^* - g \|_{L^1(\Omega_X ; \pi_X)}$, and let $g_1, g_2, \hdots \in L^p(\Omega_X; \pi_X)$ be any sequence converging to $g$.  By Theorem \ref{thm:lower-semicont}, $\liminf_{n\to \iy} \ECE_\pi(g_n) \ge \ECE_\pi(g)$.  Since $\sigma(g_n(X)) \subset \sigma(X)$, by Lemma \ref{lem:cnd-L1-ineq} we have
\begin{align*}
\limsup_{n \to \infty} \ECE_\pi(g_n) &= \limsup_{n \to \infty} \big\| \E[Y-g_n | g_n(X)] \big \|_1 \\
&\le \limsup_{n \to \infty} \big\| \E[Y-g_n | X] \big \|_1 \\ 
&= \limsup_{n \to \infty} \big\| g^* - g_n \big\|_{L^1(\Omega_X ; \pi_X)} \\
&= \big\| g^* - g \big\|_{L^1(\Omega_X ; \pi_X)} = \ECE_\pi(g).
\end{align*}
Therefore $\ECE_\pi(g_n) \to \ECE_\pi(g)$, which shows that $\ECE_\pi$ is continuous at $g$.

For the opposite direction of implication, we show that if (\ref{eqn:cty-criterion}) does not hold, then $\ECE_\pi$ is discontinuous at $g$. Suppose that $\ECE_\pi(g) \ne \| g^* - g \|_{L^1(\Omega_X ; \pi_X)}$.  By Lemma \ref{lem:inj-dense}, we can choose a sequence $g_n \to g$ in $L^p(\Omega_X; \pi_X)$ such that each $g_n$ is a bijection onto a standard Borel space. Since a measurable bijection between standard Borel spaces has a measurable inverse, this implies that $g_n^{-1}$ is a measurable bijection from the image of $g_n$ onto $\Omega_X$, which in turn implies that $\si(g_n(X)) = \si(X)$. Thus
\[
\E[Y \, | \, g_n(X) = g_n(x)] = \E[Y \, | \, X=x] = g^*(x),
\]
so that
\[
\ECE_\pi(g_n) = \| g^* - g_n \|_{L^1(\Omega_X ; \pi_X)}.
\]
The right-hand side above is a continuous functional of $g_n$ in $L^p(\Omega_X; \pi_X)$, so that
\[
\lim_{n \to \infty} \ECE_\pi(g_n) = \| g^* - g \|_{L^1(\Omega_X ; \pi_X)} \ne \ECE_\pi(g),
\]
implying that $\ECE_\pi$ is discontinuous at $g$.
\end{proof}

We leave it as an exercise to verify that if $\Omega_X = [n]$ with the discrete topology, then the condition in Theorem \ref{thm:ece-continuity} is equivalent to the condition in Theorem \ref{discretecase}.

\section{Logit Smoothed Calibration}\label{sec:lsece}
We proved in Corollary \ref{measurezero} above that for finitely supported distributions, the discontinuities of ECE have measure zero.  Although this statement only holds as written for discrete data, it nonetheless provides helpful intuition that we can use to mitigate the discontinuities of ECE in a more general setting: namely, we expect that predictors at which ECE is discontinuous should be, in some sense, rare.  Therefore, if we add some independent continuously distributed random noise $\xi$ to $g(X)$ before taking the conditional expectation in \eqref{ecedef}, we can hope that the resulting functional of $g$ will be continuous.  We show below that indeed this is the case.

However, in order to preserve the interpretation of $\ECE_{\pi}$, we need to ensure that $g(X) + \xi \in [0, 1]$. To do so, we assume that $g$ can be decomposed as $\rho \circ h$ where $h: \R^d \to \R$ and $\rho: \R \to [0, 1]$ is a strictly increasing function (e.g., $\rho$ can be the sigmoid function), observing that this is virtually always the case in practice.\footnote{We only run into issues if $g$ takes the values 0 or 1 exactly, since then $\rho^{-1}$ can be $-\infty$ and $\infty$ respectively. This is easily avoided in practice by adding/subtracting a small tolerance value to the predicted probabilities, if necessary.} We can then add the noise $\xi$ to $h(X)$ rather than $g(X)$, which allows us to define the \textit{Logit-Smoothed ECE (LS-ECE)} as follows:
\begin{align*}
    \SECE_{\pi, \xi}(h) \triangleq \E_{X, \xi} &\big[\big\vert\E[Y \mid \rho(h(X) + \xi)] - \rho(h(X) + \xi)\big\vert\big]. \numberthis \label{secedef}
\end{align*}

Henceforth, we will always use $h(X)$ to denote the \textit{logit} associated with $g(X)$, and $\rho: \R \to [0, 1]$ to denote a strictly increasing function with inverse $\rho^{-1}$ differentiable everywhere on $(0, 1)$.

\textbf{Comparison to \citet{blasiok2023smooth}.} While the main proposal of $\smECE$ in \citet{blasiok2023smooth} does not exactly match our notion of $\SECE$ (as it corresponds to smoothing residuals of the predictions), we note that their notion of $\smECEp$ shares more similarity to what we propose. Namely, $\smECEp$ corresponds to smoothing the predictor $g$ and then projecting back to $[0, 1]$. However, smoothing the logit function $h$ directly avoids issues related to thresholding/projecting and leads to a cleaner development of the theory in this section, allowing us to prove continuity, consistency, and convergence to the true ECE under reasonable assumptions.

\subsection{Continuity of LS-ECE}
We now verify that unlike $\ECE_{\pi}$, $\SECE_{\pi, \xi}(h)$ is continuous as a function of the logit $h$ in the topology of $L^{\infty}$ for \textit{any} choice of $\pi$ so long as $\xi$ has a density with respect to the Lebesgue measure and satisfies very basic regularity conditions. The crux of our argument relies on analyzing how perturbations to the joint distribution of $(Y, \rho(h(X) + \xi))$ behave with respect to total variation, and then ``pulling back'' to continuity in the space of logit functions $h$.

We begin by showing in the next two propositions that the smoothed logits $h(X) + \xi$ are continuous in total variation with respect to the $L^p$ norm on $h$.
\begin{restatable}{proposition}{tvprop}\label{tvprop}
    Let $Z_n$ denote a sequence of random variables converging to a random variable $Z$ in $L^p$ for $p \in [1, \infty]$, and let $\xi$ be an independent, real-valued random variable with density $p_{\xi}$ that is continuous Lebesgue almost everywhere.
    Suppose $Z_n, Z$ are $X$-measurable for a random variable $X$. 
    Then $(X,Z_n + \xi) \to (X, Z + \xi)$ in total variation.
\end{restatable}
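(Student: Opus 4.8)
The plan is to exploit the fact that, conditionally on $X$, the smoothed quantity $Z_n + \xi$ is simply a translate of $\xi$, and thereby reduce the whole statement to the classical continuity of translation in $L^1(\R)$.

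First I would note that both $(X, Z_n + \xi)$ and $(X, Z + \xi)$ have the same first marginal, namely $\Prob_X$. Since $Z_n$ is $X$-measurable, write $Z_n = \phi_n(X)$ for a measurable $\phi_n$ (and similarly $Z = \phi(X)$); then, because $\xi$ is independent of $X$ with density $p_\xi$, a regular conditional distribution of $Z_n + \xi$ given $X = x$ is the measure with Lebesgue density $t \mapsto p_\xi(t - \phi_n(x))$. Disintegrating the two joint laws against their common marginal $\Prob_X$ and dominating the conditional laws by Lebesgue measure turns the total variation distance into an $L^1$ distance of densities, giving the identity
\[
d_{\TV}\big((X, Z_n+\xi),\, (X, Z+\xi)\big) = \tfrac12 \int \big\| p_\xi(\cdot - \phi_n(x)) - p_\xi(\cdot - \phi(x)) \big\|_{L^1(\R)} \, d\Prob_X(x) = \tfrac12\, \E\big[\omega(Z_n - Z)\big],
\]
where $\omega(a) := \| p_\xi(\cdot - a) - p_\xi \|_{L^1(\R)}$ is the $L^1$-modulus of continuity of translation of $p_\xi$ (using translation invariance of Lebesgue measure under the integral sign).

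Next I would invoke the standard fact that translation acts continuously on $L^1(\R)$: for any $f \in L^1(\R)$, $\|f(\cdot - a) - f\|_{L^1} \to 0$ as $a \to 0$. Hence $\omega$ is continuous at $0$ with $\omega(0) = 0$, and it is bounded: $0 \le \omega \le 2\|p_\xi\|_{L^1} = 2$. Since $Z_n \to Z$ in $L^p$ implies $Z_n - Z \to 0$ in probability, continuity of $\omega$ at $0$ gives $\omega(Z_n - Z) \to 0$ in probability; boundedness of $\omega$ then upgrades this, via bounded convergence, to $\E[\omega(Z_n - Z)] \to 0$. Substituting into the displayed identity yields $d_{\TV}\big((X, Z_n + \xi),\, (X, Z+\xi)\big) \to 0$, which is exactly the claim.

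The only genuinely delicate step is the first one: justifying the disintegration formula for the total variation of two joint laws that share a marginal. This is routine in the standard Borel / Polish setting assumed here — regular conditional distributions exist, and dominating the conditional laws by Lebesgue measure converts $d_{\TV}$ into an $L^1$ distance of kernel densities — but it should be written out carefully, since everything after it is soft. It is also worth remarking that this argument uses only $p_\xi \in L^1$; the hypothesis that $p_\xi$ is continuous Lebesgue-almost-everywhere is not needed here and is presumably carried along for the subsequent propositions, where one pulls back through $\rho(h(X)+\xi)$.
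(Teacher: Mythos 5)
Your proof is correct, and its skeleton matches the paper's: both condition on $X$, reduce the joint total variation distance to the distance between translates of $\xi$, and then exploit smallness of that distance for small shifts together with $Z_n - Z \to 0$ in probability. The differences are worth noting. First, you write an exact disintegration identity $d_{\TV} = \tfrac12\,\E[\omega(Z_n - Z)]$ and finish with bounded convergence, whereas the paper proves only the inequality it needs, reducing to $p=1$ and splitting on $\{|Z_n - Z| \ge \delta\}$ with Markov's inequality; both are valid, yours is a bit cleaner, and your disintegration step is legitimate (with a common first marginal and conditional laws dominated by Lebesgue measure, the TV distance is exactly the averaged $L^1$ distance of conditional densities, and $\omega$ is continuous hence measurable). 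Second, and more substantively, you obtain the small-shift estimate from the classical continuity of translation in $L^1(\R)$, which holds for every integrable density, while the paper derives it from the hypothesis that $p_\xi$ is continuous Lebesgue-a.e.\ via Scheff\'e's lemma. You are right that the a.e.-continuity assumption is therefore not needed for this proposition: even for the fat-Cantor-set density highlighted in the paper's remark (where the Scheff\'e route breaks down), $\|p_\xi(\cdot - a) - p_\xi\|_{L^1} \to 0$ as $a \to 0$, so the conclusion still holds. That hypothesis does play a role later (e.g.\ in the good-kernels argument for the consistency result), so your closing caveat is apt; within this proposition your argument is both correct and slightly more general than the paper's.
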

\begin{proof}
    By assumption, $p_\xi(x-\ep) \to p_\xi(x)$ as $\ep\to 0$, almost everywhere. By Scheff\'e's Lemma, $d_\TV(\xi, \xi+\ep) \to 0$ as $\ep\to 0$. Hence, for all $\ep>0$, there exists $\de>0$ such that $d_{\TV}(\xi, \xi+\de')<\ep$ for all $|\de'|<\de$. 

    Note we may assume $p=1$. 
    Choosing $\de$ based on $\ep$, we have
\begin{align*}
    d_\TV((X,Z_n+\xi),(X,Z+\xi)) &\le 
    \iint d_\TV(z_n+\xi, z+\xi) \,d\Prob_{Z_n,Z|X=x}(z_n,z)\,d\Prob_X(x)\\
    &\le \iint d_\TV(\xi, z-z_n+\xi)\,d\Prob_{Z_n,Z|X=x}(z_n,z)\,d\Prob_X(x)\\
    &\le \iint_{|z_n-z|<\de} d_\TV(\xi, z-z_n+\xi)\,d\Prob_{Z_n, Z|X=x}(z_n,z)\,d\Prob_X (x) + \Prob(|Z_n-Z|\ge \de)\\
    &\le \ep + \Prob(|Z_n-Z|\ge \de)\\
    &\le \ep + \fc{\E|Z_n-Z|}{\de} \numberthis \label{tvpropstep}
\end{align*}
where the last step follows from Markov's inequality.
Choose $N$ such that for $n\ge N$, $\E|Z_n-Z|\le \de \ep$. Then for $n\ge 2N$, we have $d_\TV((X,Z_n+\xi),(X,Z+\xi))\le 2\ep$. 
\end{proof}

\begin{remark}
    An example of a density that is not continuous Lebesgue almost everywhere is an indicator on a fat Cantor set, which is discontinuous on a set of positive Lebesgue measure. By the Lebesgue differentiation theorem, any equivalent density must agree almost everywhere, and hence still be discontinuous on a set of positive measure.
\end{remark}

\begin{restatable}{proposition}{tvlemma}\label{tvlemma}
    Let $\xi$ be as in Proposition \ref{tvprop} and let $(X, Y) \sim \pi$. 
    Suppose that $h_n(X) \to h(X)$ in $L^p$ for some $p \in [1, \infty]$.
    Then $(Y, \rho(h_n(X) + \xi)) \tvto (Y, \rho(h(X) + \xi))$.
\end{restatable}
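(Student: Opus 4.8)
The plan is to deduce this from Proposition~\ref{tvprop} together with the fact that total variation distance is non-increasing under pushforward by a measurable map. The only new issue beyond Proposition~\ref{tvprop} is that the extra coordinate $Y$ must be carried along, and the trick I would use is to apply Proposition~\ref{tvprop} not with the data variable $X$ as the conditioning variable but with the pair $(X,Y)$, which is itself a random variable valued in a Polish space (e.g.\ $\R^d \times \{0,1\}$).

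Concretely, I would first note that $\xi$ is independent of $(X,Y)$ (this is built into the setup for $\SECE_{\pi,\xi}$), and that $h_n(X)$ and $h(X)$ are $(X,Y)$-measurable (being already $X$-measurable) with $h_n(X) \to h(X)$ in $L^p$ by hypothesis. Hence Proposition~\ref{tvprop}, applied with $(X,Y)$ in the role of the conditioning variable and $Z_n = h_n(X)$, $Z = h(X)$, yields
\[
\big((X,Y),\, h_n(X)+\xi\big) \;\tvto\; \big((X,Y),\, h(X)+\xi\big).
\]
The statement of Proposition~\ref{tvprop} already permits an arbitrary conditioning random variable, so nothing in its proof needs to be re-examined; the only ingredients its argument uses are independence of $\xi$ from the conditioning variable and measurability of $Z_n,Z$ with respect to it, both of which hold here.

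Second, I would consider the map $\Phi\colon \big(\R^d\times\{0,1\}\big)\times \R \to \{0,1\}\times[0,1]$ given by $\Phi\big((x,y),t\big) = (y,\rho(t))$. Since $\rho$ is monotone, hence Borel, $\Phi$ is Borel measurable, and it pushes the joint law of $\big((X,Y),\,h_n(X)+\xi\big)$ forward to the joint law of $\big(Y,\,\rho(h_n(X)+\xi)\big)$, and similarly for the limiting object. Because $d_{\TV}(\Phi_*\mu,\Phi_*\nu) \le d_{\TV}(\mu,\nu)$ for all $\mu,\nu$, the displayed convergence above immediately gives
\[
\big(Y,\, \rho(h_n(X)+\xi)\big) \;\tvto\; \big(Y,\, \rho(h(X)+\xi)\big),
\]
which is the claim.

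I do not expect a genuine obstacle here; the single point requiring a moment's care is the observation that Proposition~\ref{tvprop} may be invoked with the enlarged conditioning variable $(X,Y)$ rather than $X$, so that the label coordinate is transported along at no cost and only needs to be projected out through a pushforward at the very end.
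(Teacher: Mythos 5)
Your proof is correct and follows essentially the same route as the paper: Proposition~\ref{tvprop} followed by monotonicity of total variation under data processing. The only difference is how the label is handled: the paper applies Proposition~\ref{tvprop} with conditioning variable $X$ to get $(X, h_n(X)+\xi) \tvto (X, h(X)+\xi)$ and then applies the \emph{stochastic} kernel $\pi_{Y\mid X}$ to the first coordinate (which implicitly uses $\xi \perp (X,Y)$ so that the conditional law of $Y$ given $(X, h_n(X)+\xi)$ is $\pi_{Y\mid X}$, the same kernel for every $n$), whereas you enlarge the conditioning variable to $(X,Y)$ inside Proposition~\ref{tvprop} and then only need a deterministic pushforward $\Phi\big((x,y),t\big)=(y,\rho(t))$. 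Your variant is valid since nothing in the statement or proof of Proposition~\ref{tvprop} restricts the conditioning variable to be $\R^d$-valued, and it has the mild advantage of making the data-processing step deterministic, at the cost of invoking independence of $\xi$ from the full pair $(X,Y)$ up front; the two arguments are otherwise interchangeable.
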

\begin{proof}
    Let $Z_n = h_n(X)$ and $Z = h(X)$. 
    By Proposition \ref{tvprop}, $(X, Z_n+\xi)\tvto (X,Z+\xi)$. 
    By the data processing inequality, applying the kernel $\pi_{Y|X}$ to the first argument and $\rh$ to the second argument, $(Y,\rh(Z_n+\xi))\tvto(Y,\rh(Z+\xi))$.
\end{proof}

The final ingredient necessary for our proof of continuity is the connection between convergence in total variation of joint distributions and convergence of the associated ECEs. We provide this via the following general result, which may be of independent interest for future analysis of ECE.
\begin{restatable}{lemma}{tvdelta}\label{l:tv-delta}
    Suppose that $(Y,T_n)\to (Y,T)$ in total variation, where $T_n$, $T$ are random variables taking values in $[0,1]$. 
    Define \begin{align*}
        \Delta_n = \big\vert&\E_{T_n} \left[\abs{\E[Y \mid T_n = t] - t}\right] - \E_{T} \left[\abs{\E[Y \mid T = t] - t}\right]\big\vert. \numberthis \label{secediff}
    \end{align*}
    Then $\lim_{n\to \iy} \De_n=0$.
\end{restatable}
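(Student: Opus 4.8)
The plan is to bound $\Delta_n$ by splitting the defining functional $F(S) \triangleq \E_S[\,|\E[Y\mid S]-S|\,]$ into the contribution of the ``$Y$-part'' and the ``$S$-part''. Write $F(T_n)-F(T)$ and insert the intermediate quantity obtained by keeping the conditioning variable $T_n$ but replacing the regression function; concretely, I would use the identity $\E[Y\mid T=t]-t = \E[Y-T\mid T=t]$ and control everything through the joint law of $(Y,T)$. The key observation is that $F(S) = \|\E[Y - S \mid S]\|_1$, and this is a function of the joint distribution of $(Y,S)$ alone. So it suffices to show that $\mu \mapsto \|\E[Y-S\mid S]\|_1$ is continuous with respect to total variation on the space of joint laws $\mu$ of a pair $(Y,S) \in \{0,1\}\times[0,1]$ (or $[0,1]^2$; $Y$ bounded is all that matters).

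The main steps, in order. First, reduce to showing: if $\mu_n \to \mu$ in TV (joint laws of $(Y,S_n)$, $(Y,S)$), then $\int |\E_{\mu_n}[Y-S\mid S]|\,d\mu_n \to \int |\E_\mu[Y-S\mid S]|\,d\mu$. Second, disintegrate each joint law over the $S$-marginal: $\mu_n = \nu_n \otimes \kappa_n$ where $\nu_n$ is the law of $S_n$ on $[0,1]$ and $\kappa_n(t,\cdot)$ is the conditional law of $Y$ given $S_n=t$; then $\E_{\mu_n}[Y-S\mid S=t] = r_n(t) - t$ where $r_n(t) = \E_{\kappa_n(t,\cdot)}[Y]$, and $F(S_n) = \int_0^1 |r_n(t)-t|\,d\nu_n(t)$. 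Third, observe that TV convergence of the joint laws forces TV convergence of the $S$-marginals, $d_\TV(\nu_n,\nu)\to 0$, and also pins down the ``joint'' behavior of $r_n$ against $\nu_n$: precisely, for any bounded measurable $\phi:[0,1]\to\R$, the signed measures $\phi\cdot(r_n-\mathrm{id})\,d\nu_n$ converge to $\phi\cdot(r-\mathrm{id})\,d\nu$ in total variation of measures, since $\int \phi(t)(r_n(t)-t)\,d\nu_n(t) = \E_{\mu_n}[\phi(S)(Y-S)]$ and $Y-S$ (equivalently, the pair $(Y,S)$) is a bounded measurable functional whose expectation is $\|\mu_n - \mu\|_\TV$-continuous. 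Fourth, upgrade this to convergence of $\int|r_n-t|\,d\nu_n$: write
\begin{align*}
\left| \int |r_n(t)-t|\,d\nu_n - \int |r(t)-t|\,d\nu \right|
&\le \left| \int |r_n(t)-t|\,d\nu_n - \int |r_n(t)-t|\,d\nu \right| \\
&\quad + \left| \int |r_n(t)-t|\,d\nu - \int |r(t)-t|\,d\nu \right|.
\end{align*}
The first term on the right is at most $d_\TV(\nu_n,\nu) \to 0$ since the integrand lies in $[0,1]$. For the second term, I would show $r_n \to r$ in $L^1(\nu)$ (or in $\nu$-probability): the pushforward characterization gives, for indicator test functions $\phi = \Ind_{r_n > r + \delta}$ etc., that $\int_A (r_n - r)\,d\nu \to 0$ for a generating family of sets $A$, and combined with boundedness this yields $\|r_n - r\|_{L^1(\nu)}\to 0$; then $\big|\,|r_n(t)-t| - |r(t)-t|\,\big| \le |r_n(t)-r(t)|$ pointwise finishes it.

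The main obstacle I anticipate is step four — making the passage from ``joint moments against bounded test functions converge'' to ``the conditional expectations $r_n$ converge in $L^1(\nu)$'' fully rigorous, because the conditioning variable is itself changing with $n$ and $\nu_n$, $\nu$ need not be mutually absolutely continuous. The clean way around this is to avoid disintegrating altogether: instead, couple $(Y,T_n)$ and $(Y,T)$ optimally so that $\Prob((Y,T_n)\ne(Y,T)) = d_\TV\big((Y,T_n),(Y,T)\big) \to 0$ — but a straight coupling is not enough because $\E[Y\mid T_n]$ and $\E[Y\mid T]$ are defined w.r.t.\ different $\sigma$-algebras. So the genuinely robust route is: express $F(T_n) = \sup\{\E[\psi(T_n)(Y - T_n)] : \|\psi\|_\infty \le 1,\ \psi \text{ measurable}\}$ — i.e.\ $F$ is a supremum of $\mathrm{TV}$-continuous linear functionals, hence lower semicontinuous — and separately bound $F(T_n)$ from above by the same kind of argument applied with the (approximately optimal) test function for $F(T)$ transported through the coupling, absorbing the mismatch between $T_n$ and $T$ using $\E|T_n - T|\to 0$ (which follows from TV closeness on $[0,1]$-valued variables after the coupling) together with $|Y - T_n| \le 1$. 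Putting the lsc bound $\liminf F(T_n) \ge F(T)$ together with the matching $\limsup F(T_n) \le F(T)$ gives $\Delta_n \to 0$. I expect writing the $\limsup$ bound — choosing a near-optimal $1$-Lipschitz or simple test function for $F(T)$ and controlling how much $F$ can increase under the perturbation — to be the delicate part, and it is where the assumption that $T_n,T$ take values in the compact set $[0,1]$ is used.
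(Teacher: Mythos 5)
Your second (``robust'') route contains the right key idea, and it is genuinely different from the paper's proof: you use the dual representation $F(T)=\sup\{\E[\psi(T)(Y-T)]:\|\psi\|_\infty\le 1,\ \psi\ \text{measurable}\}$, equivalently $F$ is the total variation norm of the signed measure $A\mapsto\E[(Y-T)\Ind_{T\in A}]$, whereas the paper changes measure to the mixture $\tfrac12(\Prob_{T_n}+\Prob_{T})$ and compares Radon--Nikodym derivatives via Markov's inequality. Executed correctly, your route is shorter and quantitative: for every measurable $\psi$ with $\|\psi\|_\infty\le1$ the integrand $(y,t)\mapsto\psi(t)(y-t)$ is bounded by $1$ in absolute value (only boundedness of $|Y-T|$ matters, as you note), so total variation convergence of the joint laws gives $\ab{\E[\psi(T_n)(Y-T_n)]-\E[\psi(T)(Y-T)]}\le 2\,d_{\TV}((Y,T_n),(Y,T))$ \emph{uniformly} in $\psi$; taking suprema and using $\ab{\sup_\psi a_n(\psi)-\sup_\psi a(\psi)}\le\sup_\psi\ab{a_n(\psi)-a(\psi)}$ yields $\Delta_n\le 2\,d_{\TV}((Y,T_n),(Y,T))\to0$ directly, with no lsc/limsup split, no coupling, and no Lipschitz test functions.

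As written, however, your $\limsup$ step has a genuine gap. First, the direction is reversed: transporting a near-optimal test function for $F(T)$ to $T_n$ only shows $F(T_n)\ge F(T)-o(1)$, i.e.\ it reproduces the $\liminf$ bound you already have from lower semicontinuity; to get $\limsup_n F(T_n)\le F(T)$ you must control the $n$-dependent near-optimizer $\psi_n$ of $F(T_n)$ and show $\E[\psi_n(T_n)(Y-T_n)]\le\E[\psi_n(T)(Y-T)]+o(1)\le F(T)+o(1)$. Second, the mechanism you propose for absorbing the mismatch --- an optimal coupling together with $\E\ab{T_n-T}\to0$ --- only works for Lipschitz $\psi$, while the relevant optimizers are sign functions of the regression residual, and approximating them by Lipschitz functions with error uniform in $n$ runs into the same difficulty that stalls your first (disintegration) route, where $r_n\to r$ in $L^1(\nu)$ is left unjustified. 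The repair is the uniformity noted above: TV convergence of the joint law controls expectations of \emph{all} bounded measurable functions of $(Y,T)$ simultaneously, with error depending only on the sup norm, so no regularity of $\psi_n$ is needed and the step you flagged as delicate is in fact immediate.
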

\begin{proof}[Proof of Lemma~\ref{l:tv-delta}]
    Let $\varepsilon_1 > 0$ be arbitrary. 
    As a first step, we will apply a change of measure to write both expectations in \eqref{secediff} in terms of a single random variable. 
Let $S_n$ denote a random variable that is distributed as $T_n$ with probability $\rc2$ and $T$ with probability $\rc2$. Note that $\Prob_{S_n}$ is mutually absolutely continuous with both $\Prob_{T_n}$ and $\Prob_T$. 
Then by Markov's inequality, for $U\in \{T_n,T\}$,
\begin{align*}
    \Prob_{S_n} \pa{
        \ab{\dr{\Prob_{U}}{\Prob_{S_n}} -1}\ge \ep_1
    }\le 
    \fc{\E_{S_n}\ab{\dr{\Prob_{U}}{\Prob_{S_n}} - 1}}{\ep_1}
    =
    \fc{d_\TV(T_n,T)}{\ep_1}.
\end{align*}
By Lemma \ref{tvlemma}, for any $\de_1>0$, we can choose $n$ sufficiently large so that $d_\TV(T_n,T)\le \ep_1\de_1$, so that 
    \begin{align}
        \Prob_{S_n}\left(\ab{\frac{d\Prob_{X}}{d\Prob_{S_n}} - 1} \ge \ep_1\right) \le \delta_1.
        \label{e:dd-markov}
    \end{align}
    We change the measure using $\Prob_{T_n}, \Prob_T \ll \Prob_{S_n}$:
    \begin{align}\label{e:Den1}
        \Delta_n 
        &\le 
        \ab{\E_{S_n}\ba{\dr{\Prob_{T_n}}{\Prob_{S_n}}\ab{\E[Y\mid T_n=t]-t}}}
        - \ab{\E_{S_n}\ba{\dr{\Prob_T}{\Prob_{S_n}}\ab{\E[Y\mid T=t]-t}}}.
    \end{align}
    We compare each term to $\E_{S_n}\ba{\ab{\E[Y|U=t]-t}}$ for $U=T_n, T$, respectively. By the triangle inequality,
    \begin{align*}
        \ab{\E_{S_n}\ba{\dr{\Prob_{U}}{\Prob_{S_n}}\ab{\E[Y\mid U=t]-t}}}
        - 
        \ab{\E_{S_n}\ba{\ab{\E[Y\mid U=t]-t}}}
        &\le 
        \ab{\E_{S_n}\ba{\ab{\dr{\Prob_{U}}{\Prob_{S_n}}-1}\ab{\E[Y\mid U=t]-t}}}\\
        &\le 
        \Prob\pa{\ab{\dr{\Prob_U}{\Prob_{S_n}}-1}\ge \ep_1} + \ep_1\le \de_1+\ep_1
        \numberthis\label{e:Den2}
    \end{align*}
    where we split the expectation depending on whether $\ab{\dr{\Prob_{T_n}}{\Prob_{S_n}}- \dr{\Prob_T}{\Prob_{S_n}}}\ge \ep_1$, note that 
    $\dr{\Prob_U}{\Prob_{S_n}}\le 2$ and 
    $\ab{\E[Y|U=t]-t}\le 1$, and use Lemma~\ref{e:dd-markov}.
    From~\eqref{e:Den1} and~\eqref{e:Den2}, 
    \begin{align*}
        \Delta_n 
        &\le 
        2(\ep_1+\de_1) + 
        \E_{S_n}\ba{\ab{\E[Y\mid T_n=t]-t}}
        - 
        \E_{S_n}\ba{\ab{\E[Y\mid T=t]-t}}\\
        &\le 2(\ep_1+\de_1)
        +
        \E_{S_n}
        \ba{\ab{\E[Y\mid T_n=t]-\E[Y\mid T=t]}},
        \numberthis\label{e:Den}
    \end{align*}
    where we used the triangle inequality. 
    To simplify notation moving forward, we will use $d\Prob_{1, U}$ to denote the density of $(Y = 1, U)$. 
    Finally,
    \begin{align*}
        \E_{S_n}
        \ba{\ab{\E[Y\mid T_n=t]-\E[Y\mid T=t]}}
        &=
        \E_{S_n} \ba{\ab{
        \dr{\Prob_{1,T_n}}{\Prob_{T_n}} - \dr{\Prob_{1,T}}{\Prob_{T}}}}\\
        &= 
        \E_{S_n} 
        \ba{\ab{\dr{\Prob_{1,T_n}}{\Prob_{S_n}} \dr{\Prob_{S_n}}{\Prob_{T_n}} - 
        \dr{\Prob_{1,T}}{\Prob_{S_n}} \dr{\Prob_{S_n}}{\Prob_{T}}
        }}
        \numberthis
        \label{e:EY}
    \end{align*}
    For $U\in \{T_n, T\}$, we know $\dr{\Prob_{S_n}}{\Prob_U} \in [1-O(\ep_1), 1+O(\ep_1)]$ with probability $\ge 1-\de_1$ by \eqref{e:dd-markov}. For the first factors, we have $\dr{\Prob_{1,U}}{\Prob_{S_n}} \approx \dr{\Prob_{1,S_n}}{\Prob_{S_n}}$ from the same logic as \eqref{e:dd-markov}:
    \[
\Prob_{S_n}
\pa{
    \ab{\dr{\Prob_{1,U}}{\Prob_{S_n}} - \dr{\Prob_{1,S_n}}{\Prob_{S_n}}}\ge \ep_1
}
\le \fc{\E_{S_n} \ba{\ab{\dr{\Prob_{1,U}}{\Prob_{S_n}} - \dr{\Prob_{1,S_n}}{\Prob_{S_n}}}}}{\ep_1}
= \fc{d_\TV((Y,T_n), (Y,T))}{\ep_1}\le \de_1
    \]
    where we now use $(Y,T_n)\tvto (Y,T)$ and choose $n$ sufficiently large so that $d_\TV((Y,T_n), (Y,T))\le \ep_1\de_1$. By comparing both terms in~\eqref{e:EY} to $\dr{\Prob_{1,S_n}}{\Prob_{S_n}}$, we then get that \eqref{e:EY} is $O(\ep_1+\de_1)$. Together with~\eqref{e:Den}, we have $\De_n = O(\de_1+\ep_1)$, and since $\de_1,\ep_1$ were arbitrary we have that $\lim_{n\to \iy} \Delta_n=0$, as desired.  
\end{proof}

We can now prove the main result of this section.
\begin{restatable}{theorem}{sececontthm}\label{sececontthm}
    Let $\xi$ satisfy the conditions of Proposition \ref{tvprop}. Then $\SECE_{\pi, \xi}(h)$ is a continuous functional of $h$ in the topology of $L^{\infty}$.
\end{restatable}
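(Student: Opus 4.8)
The plan is to obtain the result directly by chaining together the three preceding statements, Propositions~\ref{tvprop} and~\ref{tvlemma} together with Lemma~\ref{l:tv-delta}; once those are available there is essentially no remaining analytic work. I would begin by fixing a sequence of logit functions $h_n$ with $h_n \to h$ in $L^\infty$, and since $L^\infty(\Omega_X;\pi_X)$ is a metric space it suffices to prove sequential continuity, i.e.\ that $\SECE_{\pi,\xi}(h_n) \to \SECE_{\pi,\xi}(h)$.

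First I would note that $h_n \to h$ in $L^\infty(\Omega_X;\pi_X)$ gives $h_n(X) \to h(X)$ in $L^\infty$ as random variables on the background probability space, hence in $L^p$ for every $p \in [1,\infty]$ since we are on a probability space; in particular Proposition~\ref{tvlemma} (with, say, $p=\infty$) applies and yields $(Y, \rho(h_n(X)+\xi)) \tvto (Y, \rho(h(X)+\xi))$. Next I would set $T_n = \rho(h_n(X)+\xi)$ and $T = \rho(h(X)+\xi)$, observing that these take values in $[0,1]$ because $\rho$ maps into $[0,1]$, so that the hypotheses of Lemma~\ref{l:tv-delta} hold and the quantity $\Delta_n$ defined there converges to $0$. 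Finally, unwinding the definition~\eqref{secedef} we have $\SECE_{\pi,\xi}(h_n) = \E_{T_n}\!\left[\abs{\E[Y \mid T_n = t] - t}\right]$ and similarly for $h$, so that $\Delta_n = \abs{\SECE_{\pi,\xi}(h_n) - \SECE_{\pi,\xi}(h)}$; therefore $\SECE_{\pi,\xi}(h_n) \to \SECE_{\pi,\xi}(h)$ and the argument is complete.

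I do not anticipate a genuine obstacle at this stage: all of the difficulty has been front-loaded into the earlier results --- in particular the Scheff\'e-type continuity-in-total-variation argument behind Proposition~\ref{tvprop} and the change-of-measure estimate behind Lemma~\ref{l:tv-delta}. The only points needing any care are the routine observations that $L^\infty$ convergence of $h$ descends to $L^p$ convergence of the random variable $h(X)$, that composing with $\rho$ keeps the smoothed predictions inside $[0,1]$ so that Lemma~\ref{l:tv-delta} is applicable, and that in the metrizable space $L^\infty$ sequential continuity coincides with continuity --- all of which are immediate.
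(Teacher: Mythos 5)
Your proposal is correct and follows essentially the same route as the paper: set $T_n = \rho(h_n(X)+\xi)$, $T = \rho(h(X)+\xi)$, invoke Proposition~\ref{tvlemma} to get $(Y,T_n)\tvto (Y,T)$, and then apply Lemma~\ref{l:tv-delta} to conclude $\abs{\SECE_{\pi,\xi}(h_n)-\SECE_{\pi,\xi}(h)} = \Delta_n \to 0$. The extra remarks you add (metrizability, $L^\infty$ convergence of $h_n(X)$, $T_n$ valued in $[0,1]$) are routine and consistent with the paper's argument.
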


\begin{proof}[Proof of Theorem~\ref{sececontthm}]
    Let $h_n$ denote a sequence of functions converging to $h$ in $L^{\infty}$. Let $Z_n = h_n(X)$, $Z = h(X)$ and $T_n = \rho(Z_n + \xi)$, $T = \rho(Z + \xi)$. By Lemma \ref{tvlemma}, we have that $(Y, T_n) \to (Y, T)$ in total variation. By Lemma~\ref{l:tv-delta}, 
    $\ab{\SECE_{\pi, \xi}(h_n) -\SECE_{\pi, \xi}(h)} = \De_n \to 0$
    as $n\to \iy$.
\end{proof}

\section{Estimation of LS-ECE}\label{sec:est}
Having established the continuity of $\SECE_{\pi, \xi}$, we turn to its estimation in practice. Let $\{(x_1, y_1), ..., (x_n, y_n)\}$ denote $n$ points sampled from the data distribution $\pi$, and let $\hat{\pi}$ denote the empirical measure of the pairs $(x_i, y_i)$. Then we can naturally approximate $\SECE_{\pi, \xi}$ by $\SECE_{\hat{\pi}, \xi}$, and then estimate $\SECE_{\hat{\pi}, \xi}$ by estimating the outer expectation in the definition of $\SECE_{\hat{\pi}, \xi}$ via sampling.

We first explicitly derive the form of $\E[Y \mid \rho(h(X) + \xi)]$ in the population case of $(X, Y) \sim \pi$, as this will make clear the form of $\SECE_{\hat{\pi}, \xi}$. This requires the following elementary proposition.
\begin{restatable}{proposition}{convprop}\label{convprop}
    Let $Z$ be an arbitrary real-valued random variable and let $\xi$ be a real-valued random variable with density $p_\xi$. Then $Z + \xi$ has the following density with respect to the Lebesgue measure:
    \begin{align*} 
        p_{Z+\xi}(t)
        = \E_Z [p_{\xi}(t - Z)]. \numberthis \label{convdensity}
    \end{align*}
\end{restatable}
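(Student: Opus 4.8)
The goal is to show that if $Z$ and $\xi$ are independent real-valued random variables with $\xi$ having density $p_\xi$, then $Z+\xi$ has density $t \mapsto \E_Z[p_\xi(t-Z)]$ with respect to Lebesgue measure. This is the standard convolution formula, and the plan is to verify it directly by computing the CDF of $Z+\xi$ and differentiating, or equivalently by verifying that the proposed function integrates correctly against indicators of intervals (or, more slickly, against all bounded measurable test functions).

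\textbf{Approach.} The cleanest route is to show that for every Borel set $A \subseteq \R$ (or it suffices to take $A = (-\infty, t]$), we have $\Prob(Z + \xi \in A) = \int_A \E_Z[p_\xi(s - Z)]\, ds$. First I would condition on $Z$: by independence, the conditional law of $Z + \xi$ given $Z = z$ is the law of $z + \xi$, which has density $s \mapsto p_\xi(s - z)$ (a translate of $p_\xi$). Hence $\Prob(Z + \xi \in A \mid Z = z) = \int_A p_\xi(s - z)\, ds$. Then integrating over the law of $Z$ and applying Tonelli's theorem (everything is nonnegative, so the interchange of integration order is unconditionally justified) gives
\begin{align*}
\Prob(Z + \xi \in A) = \int_\R \left( \int_A p_\xi(s - z)\, ds \right) d\Prob_Z(z) = \int_A \left( \int_\R p_\xi(s - z)\, d\Prob_Z(z) \right) ds = \int_A \E_Z[p_\xi(s - Z)]\, ds.
\end{align*}
Since this holds for all Borel $A$, the function $t \mapsto \E_Z[p_\xi(t - Z)]$ is by definition a density for $Z + \xi$ with respect to Lebesgue measure. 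One should also note in passing that $\E_Z[p_\xi(t-Z)]$ is a genuine measurable function of $t$ (by Tonelli/Fubini on the product space) and is finite for Lebesgue-a.e.\ $t$, since its integral over $\R$ equals $1$.

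\textbf{Main obstacle.} There is essentially no hard step here; the only thing to be careful about is the measure-theoretic bookkeeping — specifically, justifying the interchange of the $ds$ integral and the $d\Prob_Z$ integral, which is immediate from Tonelli since $p_\xi \ge 0$, and being precise that "conditional density given $Z=z$" is rigorous (this follows from independence, so one can avoid regular conditional probabilities entirely and instead write $\Prob(Z+\xi\in A) = \E_Z\big[\Prob(z + \xi \in A)|_{z = Z}\big]$ using a product-measure argument). So the proof is short; I would present it as the two-line Tonelli computation above preceded by the independence reduction.
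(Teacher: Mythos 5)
Your proof is correct and takes essentially the same approach as the paper: condition on $Z$ via independence and interchange the order of integration by Fubini/Tonelli. The only cosmetic difference is that the paper verifies the formula through the CDF $\Prob(Z+\xi\le t)$ with a translation change of variables, whereas you test against general Borel sets; the content is identical.
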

\begin{proof}
    We have that:
    \begin{align*}
        \Prob(Z +\xi \le t) 
        &= \int_{-\infty}^{\infty} \Prob (\xi \le t-z)\ d\Prob_Z(z) \\
        &= \int_{-\infty}^{\infty} \int_{-\infty}^{t - z} p_{\xi}(u) \ du \ d\Prob_Z(z) \\
        &= \int_{-\infty}^{t} \int_{-\infty}^{\infty} p_{\xi}(u - z) \ d\Prob_Z(z) \ du \numberthis \label{convproof}
    \end{align*}
    from Fubini's theorem and translation invariance of the Lebesgue measure. 
\end{proof}

For brevity we now let $T = \rho(h(X) + \xi)$ with $(X, Y) \sim \pi$. Then we have from Proposition \ref{convprop} that $\E[Y \mid T = t] = p_{T, Y = 1}(t)/p_T(t)$, where the densities $p_{T, Y=1}$ and $p_T$ are:
\begin{align*}
    p_{T, Y=1}(t) &= \pi_Y(1) (\rho^{-1})'(t) \E\big[p_{\xi}(\rho^{-1}(t)  - h(X)) \mid Y = 1\big], \numberthis \label{ptdensity} \\
    p_T(t) &= (\rho^{-1})'(t) \E\left[p_{\xi}(\rho^{-1}(t) - h(X))\right]. \numberthis \label{ptdensity2}
\end{align*}
Now if we let $\hat{T}$ be analogous to $T$ except with $(X, Y) \sim \hat{\pi}$, then we can similarly obtain the expression $\E[Y \mid \hat{T} = t] = p_{\hat{T}, Y = 1}(t)/p_{\hat{T}}(t)$, with the densities:
\begin{align*}
    p_{\hat{T}, Y = 1}(t) &= \frac{1}{n}\sum_{i = 1}^n (\rho^{-1})'(t) p_{\xi}(\rho^{-1}(t) - h(x_i)) \Ind_{y_i = 1}, \numberthis \label{tndensity2} \\
    p_{\hat{T}}(t) &= \frac{1}{n} \sum_{i = 1}^n (\rho^{-1})'(t) p_{\xi}(\rho^{-1}(t) - h(x_i)). \numberthis \label{tndensity}
\end{align*}
From here, it is straightforward to estimate $\SECE_{\hat{\pi}, \xi}(h)$ by averaging over samples from $\hat{T}$, so long as we take $p_{\xi}$ to be easy to sample from. We also see that $\E[Y \mid \hat{T} = t]$ is just the Nadaraya--Watson kernel regression estimator \citep{nadaraya1964regression, watson1964regression} evaluated at $t$ using $h$ and a kernel corresponding to the density of $\xi$, as we would expect. The form of $\E[Y \mid \hat{T} = t]$ also makes it trivial to implement estimation of $\SECE_{\hat{\pi}, \xi}(h)$ in practice, as we illustrate in Figure \ref{logitcode}.

\begin{figure}[h]
\centering
\includegraphics[width=0.95\columnwidth]{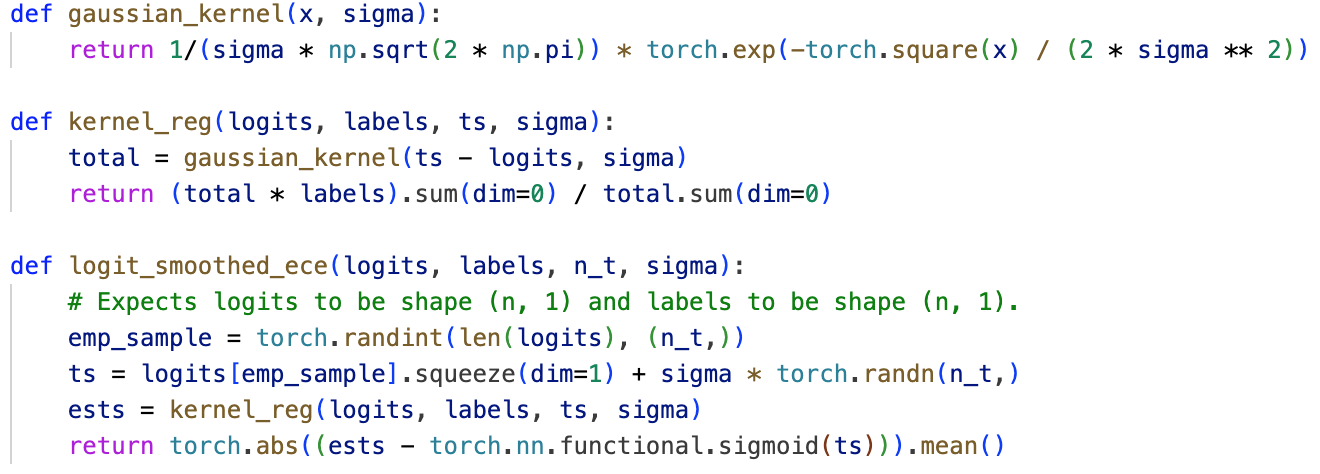} 
\caption{Implementation of $\SECE_{\hat{\pi}, \xi}(h)$ in 10 lines of PyTorch \citep{pytorch} using broadcast semantics.}
\label{logitcode}
\end{figure}

We need only now prove that $\SECE_{\hat{\pi}, \xi}(h)$ converges in probability to $\SECE_{\pi,\xi}(h)$, i.e. that estimating $\SECE_{\hat{\pi}, \xi}$ allows us to consistently estimate $\SECE_{\pi,\xi}(h)$. We note that here $\SECE_{\pi,\xi}(h)$ is a non-random scalar quantity depending on the logit function $h$ while $\SECE_{\hat{\pi}, \xi}(h)$ is a random variable depending on the pairs $(x_i, y_i) \sim \pi$. 

It turns out that the only additional stipulation on $\xi$ necessary to achieve consistency is that $\xi$ be of the form $\xi = \sigma R$ for a random variable $R$ with bounded density (for example, $\xi \sim \Gauss(0, \sigma^2)$). So long as $\xi$ is of this type, we can prove that $p_{\hat{T}} \to p_T$ and $p_{\hat{T}, Y = 1} \to p_{T, Y = 1}$ in $L^1$, which is all we need for consistency. This is encapsulated in the following lemma, which once again may be of independent interest.
\begin{restatable}{lemma}{varlemma}\label{varlemma}
    If $\xi = \sigma R$ for a random variable $R$ with bounded density, then:
    \begin{align*}
        \E_{\pi}\left[\int_0^1 \abs{p_{T, Y = 1}(t) - p_{\hat{T}, Y = 1}(t)} \ dt\right] &= O\left(\frac{1}{\sqrt{n\sigma}}\right), \numberthis \label{condptbound} \\
        \E_{\pi}\left[\int_0^1 \abs{p_{T}(t) - p_{\hat{T}}(t)} \ dt\right] &= O\left(\frac{1}{\sqrt{n\sigma}}\right). \numberthis \label{ptbound}
    \end{align*}
\end{restatable}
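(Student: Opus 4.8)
The plan is to exploit the explicit kernel-density formulas \eqref{ptdensity}--\eqref{tndensity}: $p_{\hat T}$ and $p_{\hat T,Y=1}$ are sample averages of i.i.d.\ functions whose pointwise means are exactly $p_T$ and $p_{T,Y=1}$, so the estimate reduces to the standard $L^1$-consistency bound for kernel density estimation, but with the convenient feature that the target is the \emph{already-smoothed} density $p_T$ rather than an unsmoothed one, so no bias term appears. I will prove \eqref{ptbound}; the bound \eqref{condptbound} is identical after carrying the factor $\Ind_{y_i=1}$ through (using $p_{T,Y=1}\le p_T$ to control the resulting second moment).

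Concretely, write $p_{\hat T}(t)=\frac1n\sum_i W_i(t)$ with $W_i(t):=(\rho^{-1})'(t)\,p_\xi(\rho^{-1}(t)-h(x_i))$; the $W_i$ are i.i.d.\ functions of the $x_i$ with $\E_\pi[W_i(t)]=p_T(t)$ by \eqref{ptdensity2}. Hence, for each fixed $t$, by the Cauchy--Schwarz inequality,
\[
\E_\pi\big|p_{\hat T}(t)-p_T(t)\big|\le\sqrt{\Var_\pi\big(p_{\hat T}(t)\big)}=\tfrac1{\sqrt n}\sqrt{\Var\big(W_1(t)\big)}\le\tfrac1{\sqrt n}\sqrt{\E_\pi\big[W_1(t)^2\big]}.
\]
The hypothesis $\xi=\sigma R$ now enters: $p_\xi(u)=\tfrac1\sigma p_R(u/\sigma)$, so $\|p_\xi\|_\infty=\|p_R\|_\infty/\sigma$ and $p_\xi(u)^2\le\tfrac{\|p_R\|_\infty}{\sigma}p_\xi(u)$, which gives
\[
\E_\pi\big[W_1(t)^2\big]=(\rho^{-1})'(t)^2\,\E_X\big[p_\xi(\rho^{-1}(t)-h(X))^2\big]\le\tfrac{\|p_R\|_\infty}{\sigma}\,(\rho^{-1})'(t)^2\,\E_X\big[p_\xi(\rho^{-1}(t)-h(X))\big]=\tfrac{\|p_R\|_\infty}{\sigma}\,(\rho^{-1})'(t)\,p_T(t),
\]
again by \eqref{ptdensity2}. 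Integrating in $t$ (Tonelli, nonnegative integrand), applying Cauchy--Schwarz, and then substituting $s=\rho^{-1}(t)$ (so $(\rho^{-1})'(t)\,dt=ds$ and $p_T(t)=(\rho^{-1})'(t)\,q(s)$, where $q$ is the density of $h(X)+\xi$ from Proposition \ref{convprop}), we obtain
\[
\E_\pi\Big[\int_0^1|p_T-p_{\hat T}|\,dt\Big]\le\sqrt{\tfrac{\|p_R\|_\infty}{n\sigma}}\int_0^1\sqrt{(\rho^{-1})'(t)\,p_T(t)}\,dt=\sqrt{\tfrac{\|p_R\|_\infty}{n\sigma}}\int\sqrt{q(s)}\,ds,
\]
and the last integral is a finite constant depending only on $\pi,h,\rho$ (for $\xi\sim\Gauss(0,\sigma^2)$ it is finite since $q$ has Gaussian tails), which is absorbed into the $O(\cdot)$.

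The only genuine obstacle is controlling this problem-dependent constant $\int\sqrt q$, i.e.\ ensuring $h(X)+\xi$ has light enough tails for $\sqrt q$ to be integrable; under the stated hypotheses this holds in the cases of interest (bounded logits with Gaussian or compactly supported noise), and it does not affect the $n,\sigma$ scaling. The remaining points are routine: checking that the $W_i$ are genuinely i.i.d.\ and unbiased from \eqref{ptdensity2}/\eqref{ptdensity}, justifying the interchange of $\E_\pi$ and $\int_0^1$, and keeping in mind that $\Var_\pi$ here is over the random draw of $(x_i,y_i)\sim\pi$ with $t$ (hence $p_T(t)$) held fixed.
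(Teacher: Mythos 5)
Your strategy shares the paper's main ingredients --- interpret $p_{\hat T}$ and $p_{\hat T,Y=1}$ as unbiased i.i.d.\ kernel averages via \eqref{ptdensity}--\eqref{tndensity}, reduce to a variance computation, and use $p_\xi(u)^2\le\frac{\|p_R\|_\infty}{\sigma}p_\xi(u)$ --- but it differs in where the square root is taken, and that is exactly where a genuine gap appears. You bound $\E_{\pi}\int_0^1\abs{p_{\hat T}-p_T}\,dt$ by $\int_0^1\sqrt{\Var_\pi(p_{\hat T}(t))}\,dt$, which after your (correct) change of variables produces the constant $\int_{\R}\sqrt{q(s)}\,ds$, with $q$ the density of $h(X)+\xi$. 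Nothing in the hypotheses of Lemma \ref{varlemma} makes this finite: the lemma assumes only that $R$ has a bounded density, not that the logits are bounded or that the law of $h(X)$ has light tails; a logit distribution with density tails of order $1/(s^{2}\log^{2}s)$ already gives $\int\sqrt q=\infty$, and your parenthetical that Gaussian $\xi$ forces Gaussian tails of $q$ is not true, since $q$ is the convolution of the noise with the (arbitrary) law of $h(X)$. Moreover $q$ depends on $\sigma$, and the lemma is later invoked with $\sigma=\sigma_n\to0$ in Theorem \ref{ececonsist}, so even when $\int\sqrt q$ is finite you would need a bound uniform in $\sigma$ before absorbing it into $O(1/\sqrt{n\sigma})$. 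Appealing to ``bounded logits with Gaussian or compactly supported noise'' imports assumptions not present in the statement, so as written you have proved a weaker lemma.

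The paper orders the steps differently precisely to avoid this constant: it first applies $\|\cdot\|_{L^1([0,1])}\le\|\cdot\|_{L^2([0,1])}$ on the finite-measure interval $[0,1]$, then Jensen and Fubini, so that the quantity to control becomes $\frac{1}{n}\int\E\big[p_\xi(u-h(X))^2\mid Y=1\big]\,du\le\frac{\|p_R\|_\infty}{n\sigma}$, using only $\int p_\xi=1$ and making no reference to the tails of $h(X)$. To be fair to your diagnosis, the paper's change of variables from $t$ to $u$ silently drops a leftover Jacobian weight $(\rho^{-1})'(\rho(u))=1/\rho'(u)$ --- the squared densities carry $\big((\rho^{-1})'(t)\big)^2$ while $du=(\rho^{-1})'(t)\,dt$ --- and this weight is unbounded for the sigmoid, so the obstacle you isolate is not fully dispelled by the displayed computation either. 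But to establish the lemma as stated your write-up would need either the paper's $L^2([0,1])$-first ordering with that weight tracked, or an explicit hypothesis such as $\sup_{0<\sigma\le 1}\int\sqrt{q_\sigma}<\infty$; it currently supplies neither.
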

\begin{proof}[Proof of Lemma~\ref{varlemma}]
    We recall that $p_{T, Y = 1}(t)$ has the following form, which is a result of Proposition \ref{convprop}:
    \begin{align*}
        p_{T, Y=1}(t) = (\rho^{-1})'(t) \, \pi_Y(1) \, \E\left[p_{\xi}(\rho^{-1}(t) - h(X))\mid Y = 1\right]. \numberthis \label{pty1exp}
    \end{align*}
    Now from Cauchy--Schwarz, Jensen's inequality, Fubini's theorem, and a change of variable (in that order), we obtain:
    \begin{align*}
        \E_{\pi}\bigg[\int_0^1 \abs{p_{T, Y = 1}(t) - p_{\hat{T}, Y = 1}(t)} \ dt\bigg] &\le \E_{\pi}\left[\sqrt{\int_0^1 \abs{p_{T, Y = 1}(t) - p_{\hat{T}, Y = 1}(t)}^2 \ dt}\right] \\
        &\le \sqrt{\int_0^1 \E_{\pi} \left[\abs{p_{T, Y = 1}(t) - p_{\hat{T}, Y = 1}(t)}^2\right] \ dt} \\
        &= \Bigg(\int_0^1 \E_{\pi} \bigg[\bigg\vert (\rho^{-1})'(t) \,  \pi_Y(1) \, \E\left[p_{\xi}(\rho^{-1}(t) - h(X)) \mid Y=1 \right] \\
        &\quad \quad \quad - \frac{1}{n}\sum_{i = 1}^n (\rho^{-1})'(t) \, p_{\xi}(\rho^{-1}(t) - h(x_i)) \, \Ind_{y_i = 1}\bigg\vert^2 \bigg] \ dt \Bigg)^{1/2} \\
        &= \Bigg(\int_{-\infty}^{\infty} \E_{\pi} \bigg[ \bigg\vert \pi_Y(1) \, \E\left[p_{\xi}(u - h(X)) \mid Y=1 \right] \\
        &\quad \quad \quad - \frac{1}{n}\sum_{i = 1}^n p_{\xi}(u - h(x_i)) \, \Ind_{y_i = 1} \bigg\vert^2 \bigg] \ du\Bigg)^{1/2}. \numberthis \label{varchange}
    \end{align*}
    For simplicity, let us make the following definitions before moving forward:
    \begin{align*}
        f(u) &= \pi_Y(1) \, \E\left[p_{\xi}(u - h(X))\mid Y = 1\right], \numberthis \label{fudef} \\
        \hat{f}(u) &= \frac{1}{n}\sum_{i = 1}^n p_{\xi}(u - h(x_i)) \, \Ind_{y_i = 1}. \numberthis \label{fhatudef}
    \end{align*}
    We observe that $\E_{\pi}[\hat{f}(u)] = f(u)$ for every $u$, since the $(x_i, y_i)$ are i.i.d.~according to $\pi$. Now we can continue from \eqref{varchange}:
    \begin{align*}
        \E_{\pi}\bigg[\int_0^1 \abs{p_{T, Y = 1}(t) - p_{\hat{T}, Y = 1}(t)} \ dt\bigg] &\le \sqrt{\int_{-\infty}^{\infty} \E_{\pi} \left[\abs{f(u) - \hat{f}(u)}^2\right] \ du} \\
        &= \sqrt{\int_{-\infty}^{\infty} \Var_{\pi}(\hat{f}(u)) \ du} \\
        &\le \sqrt{\frac{\pi_Y(1)^2}{n} \int_{-\infty}^{\infty} \E\left[p_{\xi}(u - h(X))^2\mid Y = 1\right] \ du} \\
        &= \sqrt{\frac{\pi_Y(1)^2}{n\sigma} \E\left[\int_{-\infty}^{\infty} \frac{M}{\sigma} p_R\left(\frac{u - h(X)}{\sigma}\right) \ du\mid Y = 1 \right]} \\
        &= O\left(\frac{1}{\sqrt{n\sigma}}\right) \numberthis \label{sqrtnstep}
    \end{align*}
    where $M = \sup p_R$. The result for $\abs{p_{T}(t) - p_{\hat{T}}(t)}$ follows identically.
\end{proof}

With Lemma \ref{varlemma}, we prove our main estimation result.
\begin{theorem}\label{consistency}
    If $\xi = \sigma R$ for a random variable $R$ with bounded density, we have that
    \begin{align*}
        \E_{\pi}\big[ \abs{\SECE_{\pi, \xi}(h) - \SECE_{\hat{\pi}, \xi}(h)} \big] = O\big(1/\sqrt{n\sigma}\big), \numberthis \label{consistencyres}
    \end{align*}
    which implies $\SECE_{\hat{\pi}, \xi}(h) \to \SECE_{\pi, \xi}(h)$ in probability.
\end{theorem}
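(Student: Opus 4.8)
The plan is to reduce the statement directly to Lemma~\ref{varlemma}. First I would rewrite both $\SECE_{\pi,\xi}(h)$ and $\SECE_{\hat\pi,\xi}(h)$ as $L^1$ norms of an ``unnormalized residual'' density. Using the identity $\E[Y\mid T=t] = p_{T,Y=1}(t)/p_T(t)$ derived just before the theorem statement (with $T=\rho(h(X)+\xi)$, $(X,Y)\sim\pi$, and $T$ supported on $(0,1)$), one has
\[
\SECE_{\pi,\xi}(h) = \int_0^1 \abs{\frac{p_{T,Y=1}(t)}{p_T(t)} - t}\, p_T(t)\,dt = \int_0^1 \abs{p_{T,Y=1}(t) - t\,p_T(t)}\,dt,
\]
and likewise $\SECE_{\hat\pi,\xi}(h) = \int_0^1 \abs{p_{\hat T,Y=1}(t) - t\,p_{\hat T}(t)}\,dt$, with $\hat T$ the empirical analogue of $T$ under $(X,Y)\sim\hat\pi$ and densities given by \eqref{tndensity2}--\eqref{tndensity}.

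Next I would apply the reverse triangle inequality $\bigl|\abs{a}-\abs{b}\bigr|\le\abs{a-b}$ pointwise in $t$ and integrate, then split with the ordinary triangle inequality and use $0\le t\le 1$ on the support of $T$, obtaining
\[
\abs{\SECE_{\pi,\xi}(h) - \SECE_{\hat\pi,\xi}(h)} \le \int_0^1 \abs{p_{T,Y=1}(t) - p_{\hat T,Y=1}(t)}\,dt + \int_0^1 \abs{p_T(t) - p_{\hat T}(t)}\,dt.
\]
Taking $\E_\pi$ of both sides and invoking the two estimates \eqref{condptbound}--\eqref{ptbound} of Lemma~\ref{varlemma} immediately yields \eqref{consistencyres}. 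Finally, convergence in probability follows since for each fixed $\sigma>0$ the bound $O(1/\sqrt{n\sigma})\to 0$ as $n\to\infty$, so Markov's inequality gives $\Prob_\pi\bigl(\abs{\SECE_{\pi,\xi}(h)-\SECE_{\hat\pi,\xi}(h)}>\ep\bigr)\to 0$ for every $\ep>0$.

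I do not expect a substantive obstacle: the analytic heavy lifting is entirely contained in Lemma~\ref{varlemma}, and the only place demanding any care is the rewriting in the first step, where $\E[Y\mid T=t]$ and $\E[Y\mid\hat T=t]$ must be expressed as ratios of the explicit densities from \eqref{ptdensity}--\eqref{tndensity}; this is already justified by Proposition~\ref{convprop} and the surrounding discussion in Section~\ref{sec:est}. The one subtlety worth double-checking is that the integral defining $\SECE$ can legitimately be restricted to $[0,1]$ (so that bounding $\abs{t}\le 1$ is valid), which holds because $\rho$ takes values in $[0,1]$ and $(\rho^{-1})'$ is supported on $(0,1)$.
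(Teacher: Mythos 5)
Your proposal is correct and follows essentially the same route as the paper: both rewrite $\SECE$ as $\int_0^1\abs{p_{T,Y=1}(t)-t\,p_T(t)}\,dt$ (and its empirical analogue), apply the triangle inequality together with $t\le 1$ to reduce the difference to the two $L^1$ density errors, and then invoke Lemma~\ref{varlemma}, with convergence in probability following from Markov's inequality. No gaps.
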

\begin{proof}
    Let $T$ be as in Lemma \ref{varlemma}, i.e. the population form of $\hat{T}$. Then we have by the triangle inequality:
    \begin{align*}
        &\E_{\pi}\big[\abs{\SECE_{\pi, \xi}(h) - \SECE_{\hat{\pi}, \xi}(h)}\big] \\
        &= \E_{\pi}\Bigg[\bigg\vert\int_0^1 \abs{\E[Y \mid T = t] - t} p_T(t) \ dt  - \int_0^1 \abs{\E[Y \mid \hat{T} = t] - t} p_{\hat{T}}(t) \ dt\bigg\vert\Bigg] \\
        &\le \E_{\pi}\bigg[\int_0^1 \abs{p_{T, Y = 1}(t) - p_{\hat{T}, Y = 1}(t)} \ dt  + \int_0^1 t\abs{p_{T}(t) - p_{\hat{T}}(t)} \ dt\bigg]. \numberthis \label{triangleineq}
    \end{align*}
    Noting that $t \le 1$ and applying Lemma \ref{varlemma} shows that the final line of \eqref{triangleineq} is $O(1/\sqrt{n\sigma})$, which is the desired result.
\end{proof}

The quantitative bound in Theorem \ref{consistency} shows that choosing $\sigma$ in practice is --- as intuition would suggest from the discussion of kernel regression --- similar to choosing the kernel bandwidth. In general, $\sigma$ can be thought of as a hyperparameter, but we will see in Section \ref{sec:experiments} that experiments are relatively insensitive to the choice of $\sigma$.

\subsection{Consequences for Estimation of ECE}\label{sec:eceest}
Nevertheless, for theoretical purposes, the scaling of $\sigma$ is important. The bound in Theorem \ref{consistency} suggests that $\sigma$ should be at least $\omega(1/n)$ to prevent the estimation error from exploding. 

It is then natural to ask what happens if we take $\sigma = \omega(1/n)$ but $\sigma \to 0$ as $n \to\infty$. By doing so --- analogously, once again, to existing results in the kernel density estimation literature --- we obtain that $\SECE_{\hat{\pi}, \xi_n}(h)$ actually becomes a consistent estimator of the true ECE, under appropriate conditions on the logit distribution $h(X)$. This is a non-trivial estimation result that we effectively get for free as a result of our framework, as shown in the short proof below.
\begin{restatable}{theorem}{ececonsist}\label{ececonsist}
    Suppose $\xi = \sigma R$ for a random variable $R$ with bounded, almost-everywhere continuous density, and let $\xi_n = \sigma_n \xi$ with $\sigma_n$ satisfying $\lim_{n \to \infty} \sigma_n = 0$ and $\sigma_n = \omega(1/n)$. 
    If the conditional distribution of $h(X)$ conditioned on $Y=y$ has an almost-everywhere continuous density for $y\in \{0,1\}$,
    then $\SECE_{\hat{\pi}, \xi_n}(h) \to \ECE_{\pi}(\rho \circ h)$ in probability.
\end{restatable}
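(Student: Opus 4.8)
\medskip

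The plan is to split the error via the triangle inequality into a \emph{stochastic} term and a \emph{deterministic} term:
\[
\big| \SECE_{\hat{\pi}, \xi_n}(h) - \ECE_\pi(\rho\circ h) \big| \le \underbrace{\big| \SECE_{\hat{\pi}, \xi_n}(h) - \SECE_{\pi, \xi_n}(h) \big|}_{\text{(I)}} + \underbrace{\big| \SECE_{\pi, \xi_n}(h) - \ECE_\pi(\rho\circ h) \big|}_{\text{(II)}},
\]
and to show that $\E_\pi[\text{(I)}] \to 0$ while $\text{(II)} \to 0$ separately; since (II) is deterministic this yields convergence in $L^1$, hence in probability. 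For term (I): $\xi_n = \sigma_n \sigma R$ is of the form (constant)$\times R$ with $R$ of bounded density, so Theorem~\ref{consistency} applies for each $n$ and gives $\E_\pi[\text{(I)}] = O\big(1/\sqrt{n\sigma_n\sigma}\big)$. Since $\sigma$ is a fixed constant and $\sigma_n = \omega(1/n)$, we have $n\sigma_n \to \infty$, so $\E_\pi[\text{(I)}] \to 0$.

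Term (II) is the heart of the argument, and the hypothesis on the conditional law of $h(X)$ is essential here: without it (II) need not vanish (e.g.\ if $h(X)$ is deterministic, $\rho(h(X))$ is a point mass while $\rho(h(X)+\xi_n)$ is absolutely continuous, so the two joint laws with $Y$ remain at total variation distance bounded away from $0$). Set $T_n = \rho(h(X)+\xi_n)$ and $T = \rho(h(X))$, both $[0,1]$-valued, and note $\SECE_{\pi,\xi_n}(h) = \E_{T_n}[|\E[Y\mid T_n = t] - t|]$ and $\ECE_\pi(\rho\circ h) = \E_T[|\E[Y\mid T = t]-t|]$, so that $\text{(II)} = \Delta_n$ in the notation of Lemma~\ref{l:tv-delta}. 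By that lemma it therefore suffices to prove $(Y, T_n) \to (Y, T)$ in total variation, and we do this by conditioning on $Y$. Writing $q_y$ for the density of $h(X)\mid Y = y$, and using that $\xi_n$ is independent of $(X,Y)$, Proposition~\ref{convprop} gives that $h(X)+\xi_n \mid Y=y$ has density $q_y * p_{\xi_n}$, where $p_{\xi_n}(u) = (\sigma_n\sigma)^{-1} p_R(u/(\sigma_n\sigma))$ is an approximate identity as $n\to\infty$. The standard mollifier estimate $\|q_y * p_{\xi_n} - q_y\|_1 \le \int \|q_y(\cdot - \sigma_n\sigma w) - q_y\|_1 \, p_R(w)\, dw$, whose right-hand side tends to $0$ by continuity of translation in $L^1$ and dominated convergence (with dominating function $2 p_R$), then gives $d_{\TV}(h(X)+\xi_n\mid Y=y,\ h(X)\mid Y=y) = \tfrac12\|q_y * p_{\xi_n} - q_y\|_1 \to 0$. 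Applying $\rho$ to the second coordinate (data processing) and using that $(Y,T_n)$ and $(Y,T)$ share the marginal $\pi_Y$ of the binary label, we conclude $d_{\TV}\big((Y,T_n),(Y,T)\big) \le \sum_{y\in\{0,1\}} \pi_Y(y)\, d_{\TV}\big(T_n\mid Y=y,\ T\mid Y=y\big) \to 0$. Lemma~\ref{l:tv-delta} then yields $\text{(II)} = \Delta_n \to 0$, completing the proof.

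The main obstacle is term (II): one must recognize that it is \emph{not} covered by Theorem~\ref{sececontthm} or by Proposition~\ref{tvprop} (whose noise variable is fixed rather than vanishing), and that the conditional-density hypothesis is precisely what converts the comparison between the smoothed and unsmoothed joint laws into a total-variation convergence that feeds into Lemma~\ref{l:tv-delta}. Everything else — the approximate-identity estimate, the reduction of joint TV to conditional TV over the binary label, and the bookkeeping of the $\sigma_n\sigma$ scaling — is routine.
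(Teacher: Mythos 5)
Your proof is correct and follows essentially the same route as the paper's: the same triangle-inequality split into the sampling error (killed by Theorem~\ref{consistency} with scale $\sigma_n\sigma$, using $n\sigma_n\to\infty$) and the deterministic bias, with the latter reduced via Lemma~\ref{l:tv-delta} to showing $(Y,\rho(h(X)+\xi_n))\tvto(Y,\rho(h(X)))$, established by conditioning on $Y$ and a data-processing step. The one place you diverge is the approximate-identity step: the paper proves $h(X)+\xi_n\mid Y=y \tvto h(X)\mid Y=y$ by pointwise convergence of the convolution at continuity points of the conditional density followed by Scheff\'e's lemma (which is where the almost-everywhere continuity hypotheses on $p_R$ and on the conditional densities are used), whereas you use the mollifier bound $\|q_y * p_{\xi_n}-q_y\|_1 \le \int \|q_y(\cdot-\sigma_n\sigma w)-q_y\|_1\,p_R(w)\,dw$ together with $L^1$-continuity of translation and dominated convergence. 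Your variant is equally valid and in fact slightly stronger: it needs only that the conditional laws of $h(X)$ given $Y=y$ be absolutely continuous (and $p_R$ integrable), so the a.e.-continuity assumptions in the theorem statement are not actually needed for this step.
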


\begin{proof}
First we claim that if $Z$ has an a.e. continuous density $p_Z$, then $Z+\xi_n\to Z$ in total variation. This follows by noting that $p_{\si \xi}(x) = \rc{\si}p_{\xi}\pf x{\si}$ is a sequence of good kernels (i.e.~an approximation to the identity)~\citep{stein2011fourier}, so that 
\[p_{Z+\xi_n}(x) = (p_Z *p_{\xi_n})(x) \to p_Z(x)\] at any continuity point of $p_Z$. Then $Z+\xi_n\tvto Z$ by Scheff\'e's Lemma.
 
Now for $y \in \{0,1\}$, by assumption and the above claim, conditioned on $Y=y$, we have $h(X)+\xi_n\tvto h(X)$.  
Hence $(Y,h(X)+\xi_n) \tvto (Y,h(X))$. 

By Lemma~\ref{l:tv-delta}, $|\SECE_{\pi, \xi_n}(h) - \ECE_\pi(\rh\circ h)|\to 0$ as $n\to \iy$. 
The result then follows from the triangle inequality and Theorem \ref{consistency}.
\end{proof}

\textbf{Comparison to existing estimation results.} To our knowledge, the only existing consistency results for ECE (or, more specifically, the $L^1$ ECE) are the works of \citet{zhang2020mix} and \citet{popordanoska2022consistent} that show consistency via adapting corresponding results for kernel density estimation. These results thus require assumptions such as H\"older-smooth and bounded or Lipschitz continuity of the density\footnote{Technically these assumptions were stated in terms of the conditional density of the predictor value $g(X)$ given $Y = y$, but they imply similar constraints on $h$ when considering $g = \rho \circ h$ with $\rho$ being the sigmoid function.} of $h(X)$ conditioned on $Y = y$, whereas we require only a.e. continuity.

\section{Experiments}\label{sec:experiments}
We now empirically verify that $\SECE_{\pi, \xi}$ behaves nicely even when $\ECE_{\pi}$ does not. We revisit the simple 2-point data distribution of Definition \ref{discretedist} in Section \ref{sec:synthdata}, and show that the discontinuity at $g(x) = 1/2$ leads to oscillatory behavior in $\ECE_{\BIN, \pi}$ (defined above in \eqref{binece}) as we change the number of bins, whereas $\SECE_{\hat{\pi}, \xi}$ remains effectively constant irrespective of the choice of variance for $\xi$. On the other hand, we also show that for image classification using a wide range of models, $\ECE_{\BIN, \pi}$ changes smoothly as we vary the number of bins, and the resulting estimates match up closely with both $\SECE_{\hat{\pi}, \xi}$ as well as the $\smECE$ of \citet{blasiok2023smooth}. For all experiments in this section, we take $\xi \sim \Gauss(0, \sigma^2)$. We consider using uniform noise in Appendix \ref{sec:kernelchoice}; the choice of $\xi$ does not impact our conclusions. All of the code used to generate the plots in this section can be found at: \url{https://github.com/2014mchidamb/how-flawed-is-ece}.

\subsection{Synthetic Data}\label{sec:synthdata}
We consider data drawn from a distribution $\pi$ as in Definition \ref{discretedist}, and the predictor $g(x) = \rho(\alpha x)$ where $\rho$ is the sigmoid function and $\alpha = 10^{-3}$. We construct $g$ in this way so that $g(-0.5) = 1/2 - \varepsilon$ and $g(0.5) = 1/2 + \varepsilon$, i.e. $g$ matches our discussion in Section \ref{sec:discrete}.

As we already know, $\ECE_{\pi}$ is discontinuous at the predictor which always predicts $1/2$. This immediately presents a problem for the estimation of $\ECE_{\pi}(g)$ via $\ECE_{\BIN, \pi}(g)$; indeed, one can readily compute (as done in \citep{blasiok2023unifying}) that $\ECE_{\BIN, \pi}(g)$ jumps between $\approx 0$ and $\approx 1/2$ depending on the parity of the number of bins used. We visualize this behavior in Figure \ref{synthplot}, where we plot $\ECE_{\BIN, \pi}(g)$ evaluated on 1000 samples from $\pi$ with the number of bins ranging from $1$ to $100$. 
\begin{figure}[h]
\centering
\includegraphics[width=0.5\columnwidth]{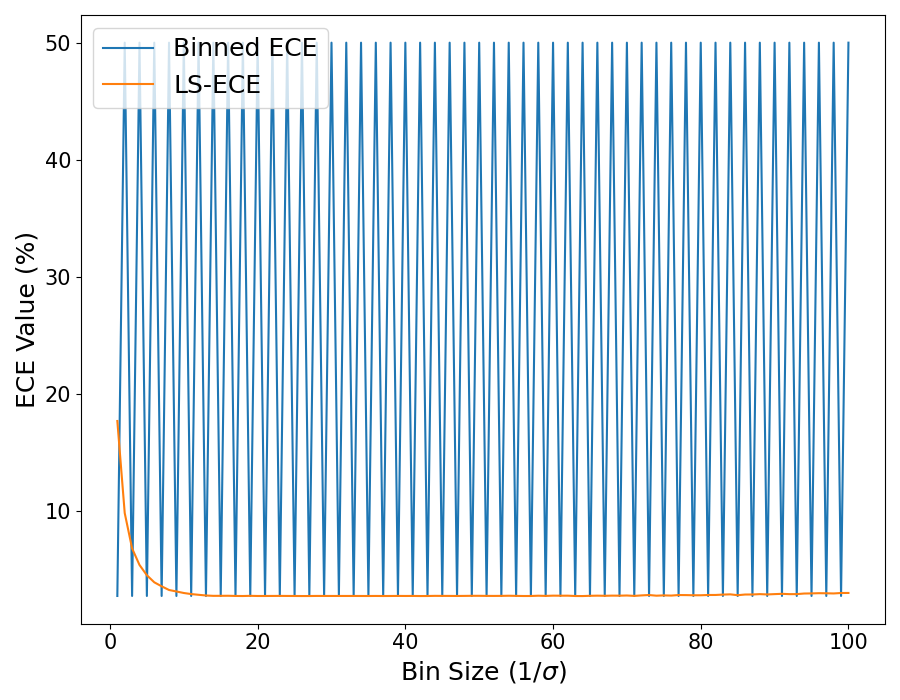} 
\caption{Comparison of $\ECE_{\BIN, \pi}$ (blue) and $\SECE_{\pi, \xi}$ (orange) over bins (and correspondingly, inverse scalings for $\xi$) ranging from 1 to 100 on the model and data setup of Section \ref{sec:synthdata}.}
\label{synthplot}
\end{figure}

Alongside $\ECE_{\BIN, \pi}(g)$, we also plot $\SECE_{\hat{\pi}, \xi}(h)$, where $h(x) = \alpha x$ and $\xi \sim \Gauss(0, \sigma^2)$. We let $\sigma$ be the inverse of the number of bins used for $\ECE_{\BIN, \pi}(g)$. The motivation for this choice of $\sigma$ comes from considering a uniform kernel (i.e. $\xi \sim \sigma \Uni([-1/2, 1/2])$), since in this case $\sigma$ corresponds to the bin size centered at each point. We estimate $\SECE_{\hat{\pi}, \xi}(h)$ via 10000 independent samples drawn from the distribution of $h(X) + \xi$, as discussed in Section \ref{sec:est}. We see that, outside the case of large $\sigma$ (i.e. $\sigma \gtrsim 1$), $\SECE_{\hat{\pi}, \xi}(h)$ remains effectively constant near zero and entirely avoids the oscillatory behavior of $\ECE_{\BIN, \pi}$.

\subsection{Image Classification}\label{sec:imageclass}
More importantly, we now check whether this disparity between $\ECE_{\BIN, \pi}(g)$ and $\SECE_{\hat{\pi}, \xi}(h)$ appears in settings of practical interest. We consider CIFAR-10, CIFAR-100 \citep{krizhevsky2009learning}, and ImageNet \citep{imagenet} and compare $\ECE_{\BIN, \pi}(g)$ using the bin numbers $\{1, 10, 20, ..., 100\}$ to both $\SECE_{\hat{\pi}, \xi}(h)$ with inversely proportional $\sigma$ and $\smECE$. We point out that $\smECE$ uses a particular choice of kernel bandwidth which we do not vary, so the $\smECE$ results are constant with respect to $\sigma$ for each model. As before, we estimate $\SECE_{\hat{\pi}, \xi}(h)$ using 10000 independent samples drawn from the distribution of $h(X) + \xi$.

Since all of the experiments in this section deal with multi-class classification, we use the top-class (or confidence calibration) formulations of ECE, LS-ECE, and $\smECE$ (see \eqref{tcecedef}). For LS-ECE, we construct the logit function $h(x)$ as $h(x) = \rho^{-1}(\max_i g^i(x))$, i.e. we apply the inverse sigmoid function to the maximum predicted probability.
\begin{figure}[h]
\centering
    \subfigure[CIFAR-10]{\includegraphics[width=0.48\columnwidth]{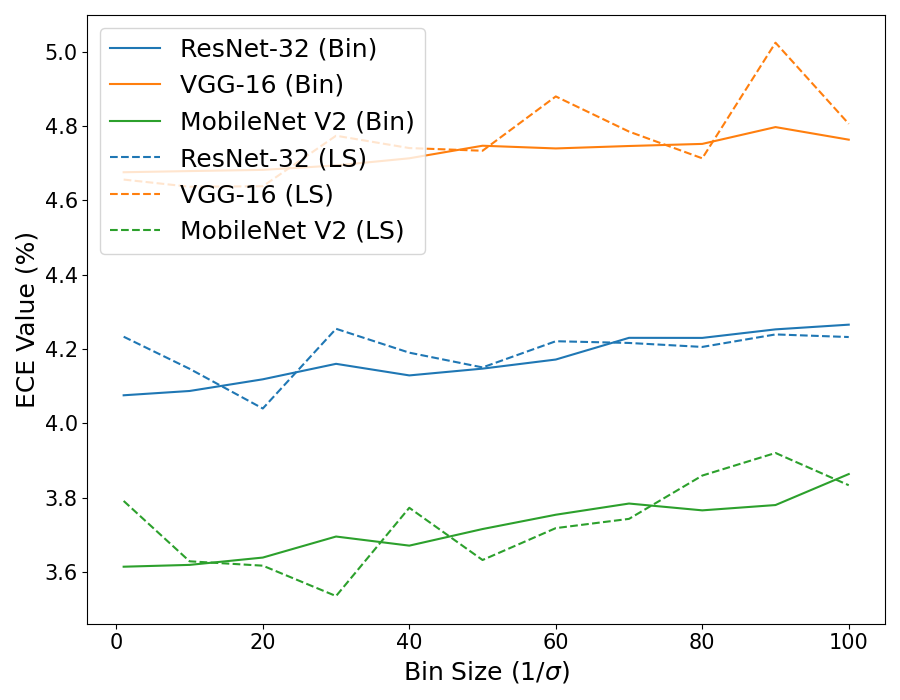}} 
    \subfigure[CIFAR-100]{\includegraphics[width=0.48\columnwidth]{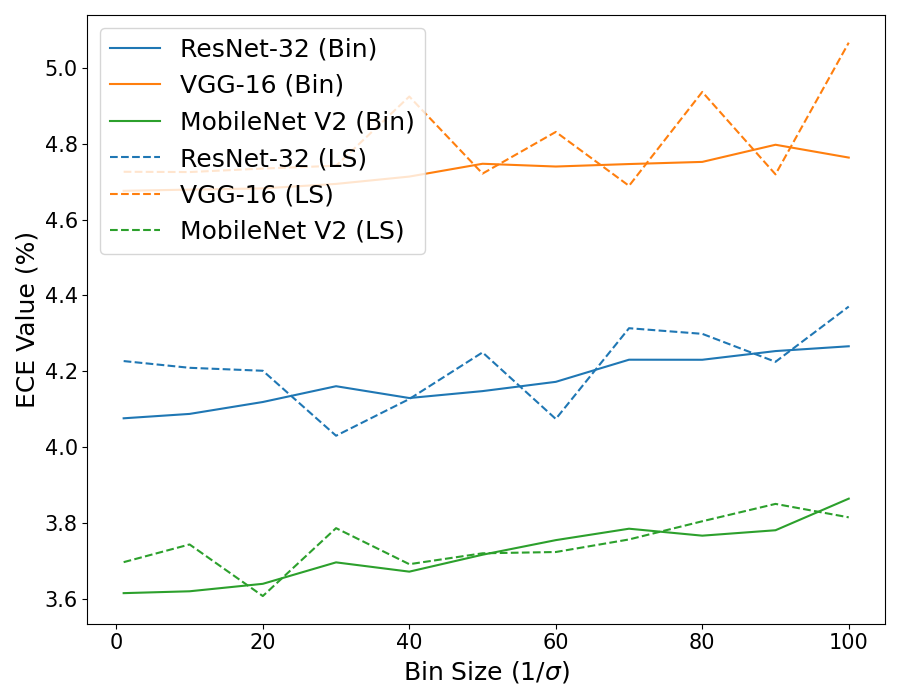}} 
\caption{Comparison of $\ECE_{\BIN, \pi}$ and $\SECE_{\pi, \xi}$ for different models on CIFAR datasets over bins/variance scalings ranging from 1 to 100. Solid lines correspond to $\ECE_{\BIN, \pi}$ and dashed lines correspond to $\SECE_{\pi, \xi}$.}
\label{cifarplots}
\end{figure}

\subsubsection{CIFAR Experiments}
For our CIFAR experiments, we use pretrained versions (due to Yaofo Chen) of ResNet-32 \citep{resnet}, VGG-16 \citep{simonyan2015deep}, and MobileNetV2 \citep{sandler2019mobilenetv2} available on TorchHub. Results for evaluating these models on the CIFAR-10 and CIFAR-100 test data are shown in Figure \ref{cifarplots}.

As can be seen, $\ECE_{\BIN, \pi}(g)$ stays nearly the same as we change the number of bins, and $\SECE_{\pi, \xi}(h)$ tracks it quite closely. (Although $\SECE_{\pi, \xi}(h)$ visually appears to exhibit more variance, the scale of this variance is small.) Furthermore, we note that the conclusions drawn from both ECE and LS-ECE about which model is best calibrated stay consistent across the choice of bin number/variance scaling.

\subsubsection{ImageNet Experiments}\label{sec:imagenet}
\begin{figure}[h]
\centering
\includegraphics[width=0.5\columnwidth]{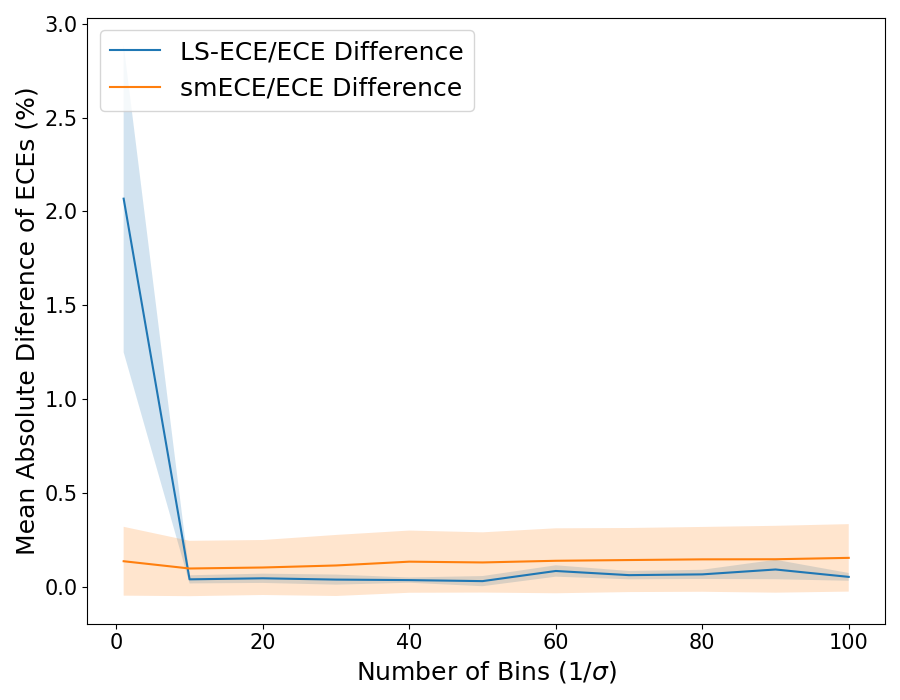} 
\caption{Mean absolute difference between ECE and LS-ECE, as well as ECE and $\smECE$, on ImageNet-1K-val over all models considered in Section \ref{sec:imagenet}, with one standard deviation error bounds marked using the shaded region.}
\label{imagenetplot}
\end{figure}
We repeat our CIFAR experimental setup for ImageNet, and consider a gamut of pretrained architectures: ResNet-18 and ResNet-50 \citep{resnet, resnetstrikes}, EfficientNet \citep{tan2019efficientnet}, MobileNetV3 \citep{howard2019searching}, Vision Transformer \citep{dosovitskiy2020vit}, and RegNet \citep{Radosavovic2020}. All of our models are obtained from the \texttt{timm} \citep{rw2019timm} library and were pretrained using the techniques described by \citet{resnetstrikes}. We evaluate all models on the ImageNet-1K validation data, and in Figure \ref{imagenetplot} we report the mean absolute difference (over all models) between ECE and LS-ECE, as well as ECE and $\smECE$, across bin numbers and choices of $\sigma$ respectively.

The ImageNet results further corroborate our CIFAR findings: ECE, LS-ECE, and $\smECE$ take near-identical values for all models considered, across the range of possible bin numbers and variances (although once again, $\smECE$ is not sensitive to these). Although this is by no means a comprehensive evaluation, the fact that the \textit{continuous} LS-ECE so closely tracks ECE in these experiments suggests that the theoretical pathologies of ECE may not pose a problem for assessing the calibration of real-world models in practice.

\section{Conclusion}
In summary, we have entirely characterized the discontinuities of ECE in a very general setting. We further used these continuity results to motivate the construction of LS-ECE, a continuous analogue of ECE that tracks it closely, and which in fact can be used to obtain a consistent estimator of ECE. As the results in this work are largely theoretical, a natural direction for future work would be a large-scale empirical validation of ECE results in the literature using the ideas we have presented.

\section*{Acknowledgments}

The work of M.C. is supported (via Rong Ge) by the National Science Foundation under grant numbers DMS-2031849 and CCF-1845171 (CAREER). The work of C.M. is supported by the National Science Foundation under grant number DMS-2103170 and by a grant from the Simons Foundation.  The work of S.R. is supported by the National Science Foundation under grant number DMS-2202959. Part of this work was conducted while the authors were attending the Random Theory 2023 workshop.

\bibliography{refs}
\bibliographystyle{refs}

\newpage
\appendix

\section{Impact of Noise Distribution on LS-ECE}\label{sec:kernelchoice}
Here we examine the effect of changing the noise distribution of $\xi$ on $\SECE_{\hat{\pi}, \xi}$. In particular, we contrast the choice of Gaussian noise to compactly supported noise, and consider instead $\xi \sim \sigma \Uni([-1/2, 1/2])$. Figure \ref{imagenetplotuniform} shows the results of redoing the experiments of Section \ref{sec:imagenet} with this choice of $\xi$. As can be seen, the results are essentially the same as Figure \ref{imagenetplot}.

\begin{figure}[h]
\centering
\includegraphics[width=0.5\columnwidth]{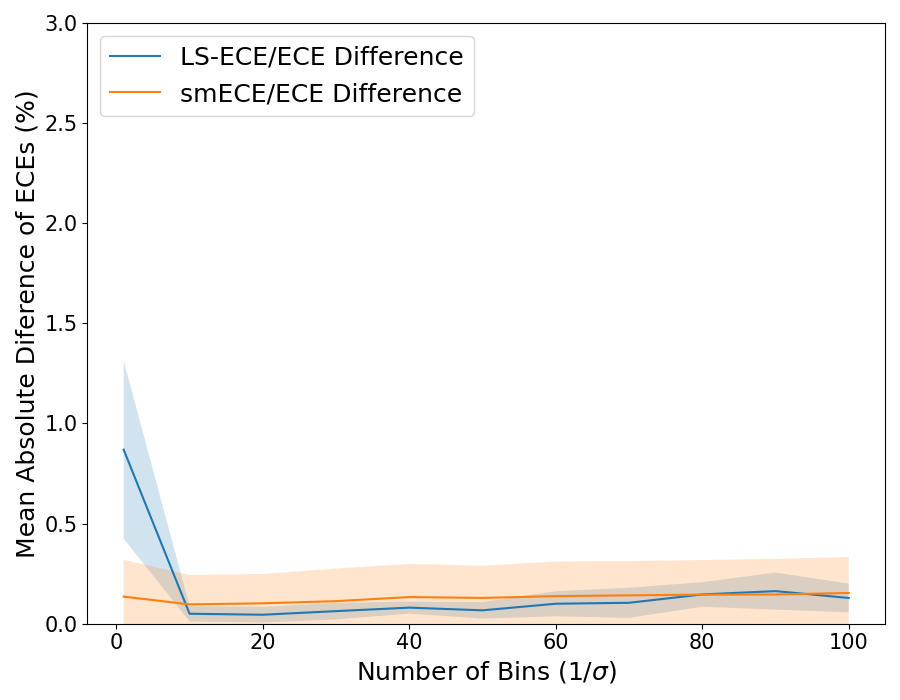} 
\caption{Mean absolute difference between ECE and LS-ECE (using uniform noise instead of Gaussian), as well as ECE and $\smECE$, on ImageNet-1K-val over all models considered in Section \ref{sec:imagenet}, with one standard deviation error bounds marked using the shaded region.}
\label{imagenetplotuniform}
\end{figure}

\end{document}